\newtheorem{theorem}{Theorem}[section]
\newtheorem{lemma}[theorem]{Lemma}
\newtheorem{definition}[theorem]{Definition}
\newcommand{\eg}{{\em e.g.,~}}     
\newcommand{\ie}{{\em i.e.,~}}      
\title{HG-Adapter: Improving Pre-Trained Heterogeneous Graph Neural Networks with Dual Adapters}
\author{\textbf{Yujie Mo$^{1,2}$} \quad  \textbf{Runpeng Yu$^2$} \quad \textbf{Xiaofeng Zhu$^{1*}$} \quad
\textbf{Xinchao Wang$^{2}$}\thanks{Corresponding authors}\\
$^1$School of Computer Science and Engineering, \\
University of Electronic Science and Technology of China\\
$^2$National University of Singapore\\
}
\begin{document}

\maketitle


\begin{abstract}

The ``pre-train, prompt-tuning'' paradigm has demonstrated impressive  performance for tuning pre-trained heterogeneous graph neural networks (HGNNs) by mitigating the gap between pre-trained models and downstream tasks. However, most prompt-tuning-based works may face at least two limitations: (i) the model may be insufficient to fit the graph structures well as they are generally ignored in the prompt-tuning stage, increasing the training error to decrease the generalization ability; and (ii) the model may suffer from the limited labeled data during the prompt-tuning stage, leading to a large generalization gap between the training error and the test error to further affect the model generalization. To alleviate the above limitations, we first derive the generalization error bound for existing prompt-tuning-based methods, and then propose a unified framework that combines two new adapters with potential labeled data extension to improve the generalization of pre-trained HGNN models. 
Specifically, we design dual structure-aware adapters to adaptively fit task-related homogeneous and heterogeneous structural information. We further design a label-propagated contrastive loss and two self-supervised losses to optimize dual adapters and incorporate unlabeled nodes as potential labeled data. Theoretical analysis indicates that the proposed method achieves a lower generalization error bound than existing methods, thus obtaining superior generalization ability. Comprehensive experiments demonstrate the effectiveness and generalization of the proposed method on different downstream tasks.
 \end{abstract}


\section{Introduction}
Pre-trained heterogeneous graph neural networks (HGNNs) are designed to pre-train models on the heterogeneous graph data and then effectively generalize to diverse tasks \citep{kddFanZHSHML19, jiang2021pre}. 
To achieve this, current pre-trained HGNNs typically utilize unsupervised techniques during pre-training to learn fundamental properties, thereby enhancing the generalization ability of models \citep{yang2022self, fan2024decoupling}. 
Consequently, pre-trained HGNNs have demonstrated promising potential in real applications such as recommendation systems, social network analysis, and molecular design \citep{shi2016survey, HGMAE,wu2024learning}.




Existing pre-trained HGNNs generally follow two paradigms, \ie ``pre-train, fine-tuning'' and ``pre-train, prompt-tuning''. 
The ``pre-train, fine-tuning” paradigm typically first trains the model with unlabeled data in the pre-training stage, and then updates the pre-trained model with task-related labels in the fine-tuning stage to adapt it to downstream tasks \citep{WangLHS21, HGMAE}.
However, the two stages in the ``pre-train, fine-tuning'' paradigm optimize different objectives, resulting in the gap between pre-trained models and downstream tasks that weakens the model generalization \citep{liu2023pre, yu2024multigprompt}. To solve this issue, recent works propose the ``pre-train, prompt-tuning'' paradigm to connect pre-trained models with downstream tasks by designing a learnable prompt that directly appends to (or modifies) the model input \citep{yu2024generalized,liu2023graphprompt}. For instance, HGPrompt \citep{yu2024hgprompt} proposes  dual-template and dual-prompt to unify pre-training and downstream tasks for both homogeneous and heterogeneous graphs. Similarly, HetGPT \citep{ma2023hetgpt} introduces virtual class and heterogeneous feature prompts, and then reformulates the downstream objective function to align  it with pre-training tasks. 


However, existing ``pre-train, prompt-tuning” methods have at least two limitations. First, the prompt-tuning in existing methods only focuses on the node features while ignoring the graph structures in the heterogeneous graph. As a result, the model may not sufficiently fit the task-related information in graph structures during the prompt-tuning stage, thereby  increasing the training error and decreasing the generalization ability on downstream tasks \citep{bousquet2002stability}. Second, the pre-trained models are generally trained in an unsupervised manner, thus their ability to generalize to unlabeled data may be constrained by the limited labeled data during the prompt-tuning stage. Consequently, this may result in a large generalization gap between the training error and the test error, leading to sub-optimal generalization performance \citep{arora2018stronger}.

\begin{figure*}[!t]
\centering
\vspace{-1mm}
		\subfigure{{\includegraphics[scale=0.73]{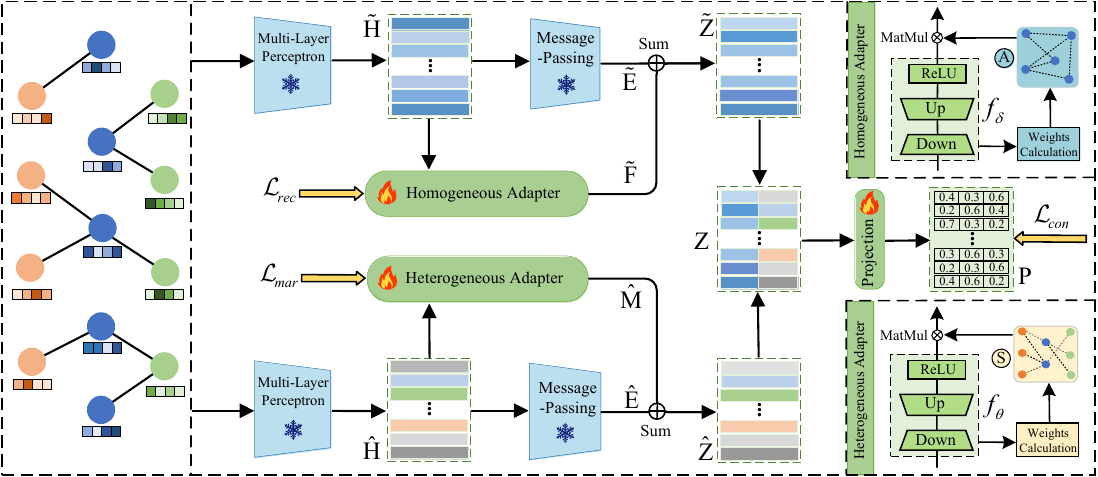}}}
		\vspace{-1mm}
		\caption{
  {The flowchart of the proposed HG-Adapter. Given the frozen representations $\tilde{\mathbf{H}}$ from the pre-trained model, the homogeneous adapter is designed to generate the adapted representations $\tilde{\mathbf{F}}$ by tuning the homogeneous graph structure $\mathbf{A}$ (details in the top right corner). After that, the homogeneous representations $\tilde{\mathbf{Z}}$ are obtained by summing the frozen representations $\tilde{\mathbf{E}}$ after message-passing and  $\tilde{\mathbf{F}}$. Similarly, the heterogeneous representations $\hat{\mathbf{Z}}$ are obtained by summing the frozen representations $\hat{\mathbf{E}}$ after message-passing and the adapted representations $\hat{\mathbf{M}}$ from the heterogeneous adapter (details in the bottom right corner). Furthermore, $\tilde{\mathbf{Z}}$ and $\hat{\mathbf{Z}}$  are concatenated to generate ${\mathbf{Z}}$, which is then mapped to the prediction matrix $\mathbf{P}$. Finally, HG-Adapter designs a label-propagated contrastive loss (\ie $\mathcal{L}_{con}$) and two self-supervised losses (\ie $\mathcal{L}_{rec}$ and $\mathcal{L}_{mar}$) to optimize dual adapters and extend potential labeled data to improve the model generalization.}
  }
	\vspace{-3mm}
		\label{frame}
\end{figure*}

To alleviate the above limitations and improve the model generalization, three challenges remain to be addressed. (i) Existing prompt-tuning-based methods usually rely on heuristically designed prompts to improve generalization but lack a unified theoretical framework, highlighting the necessity to formally understand and derive the key factors that affect the generalization of existing methods.
(ii) Based on such theoretical foundation, a new framework needs to be designed to effectively capture the task-related structural information in the heterogeneous graph, thus decreasing the training error and improving the model generalization.
(iii) Given the difficulty of directly increasing the labeled data, new alternative solutions are required to mitigate the issue of limited task-related labels in the tuning stage, thereby decreasing the generalization gap and further improving the model generalization.

In this paper, we address the above challenges by first deriving a generalization error bound for existing prompt-tuning-based methods and then proposing a novel adapter-tuning-based framework (termed HG-Adapter) with dual structure-aware adapters and potential labeled data extension, thus improving the generalization of pre-trained HGNNs, as shown in Figure \ref{frame}. Specifically, we first theoretically analyze  the generalization error bound for existing methods based on the training error and the generalization gap, thus building the theoretical foundation for our method and exploring  \textbf{challenge (i)}. After that, we design homogeneous and heterogeneous adapters to capture the task-related structural information by adaptively tuning both homogeneous and heterogeneous graph structures, thus decreasing the training error and improving the model generalization to explore  \textbf{challenge (ii)}. Moreover, we design a label-propagated contrastive loss and two self-supervised losses on unlabeled nodes, thus potentially extending the labeled data and decreasing the generalization gap to further improve the model generalization and explore \textbf{challenge (iii)}. Finally, we theoretically demonstrate that the proposed method achieves a lower generalization error bound than existing prompt-tuning-based methods, leading to superior performance across different downstream tasks.


Compared to existing works, our main contributions are summarized as follows:

\begin{itemize}
    \item  This effort is, to our best knowledge, the first dedicated attempt to design a unified ``pre-train, adapter-tuning'' paradigm to improve different pre-trained HGNN models.

    \item We design dual structure-aware adapters to capture task-related homogeneous and heterogeneous structural information. Moreover, we design a label-propagated contrastive loss and two self-supervised losses to achieve the potential labeled data extension.

    \item We theoretically derive a unified generalization error bound for existing methods based on the training error and the generalization gap. Moreover, we demonstrate that the proposed method achieves a lower generalization error bound than existing prompt-tuning-based methods to improve the generalization ability of pre-trained HGNN models. 

    \item We experimentally validate the superior effectiveness and generalization of the proposed HG-Adapter compared to state-of-the-art fine-tuning-based and prompt-tuning-based methods, and demonstrating its adaptability to different pre-trained HGNN models.
\end{itemize}



\section{Method}
\label{method}
\subsection{Preliminaries}
\begin{definition}
 Heterogeneous graph \citep{sun2012mining} is defined as $\mathbf{G}=(\mathcal{V}, \mathcal{E}, \mathcal{A}, \mathcal{R}, \phi, \varphi)$, where $\mathcal{V}$ and $\mathcal{E}$ indicate the set of nodes and the set of edges, respectively. $\mathcal{A}$ and $\mathcal{R}$ indicate the set of node types and the set of edge types, respectively, where $|\mathcal{A} \cup \mathcal{R}|>2$ and the node type relevant to downstream tasks is referred to as the target node. Moreover,  $\mathcal{A}$ and $\mathcal{R}$ are associated with the node type mapping function $\phi: \mathcal{V} \rightarrow \mathcal{A}$ and the edge type mapping function $\varphi: \mathcal{E} \rightarrow \mathcal{R}$, respectively.
\end{definition}

\begin{definition}
Pre-trained HGNNs \citep{WangLHS21, mo2024selfsupervised} generally include homogeneous and heterogeneous branches, which can be divided into two steps: 1) mapping the original node features $\mathbf{X}$ with the Multi-Layer Perceptron (MLP) to obtain low-dimensional representations, and 2) conducting the message-passing among nodes from the same type and nodes from different types based on homogeneous and heterogeneous graph structures, respectively, i.e.,
\begin{equation}
\begin{aligned}
    &\tilde{\mathbf{H}}=\operatorname{MLP}(\mathbf{X}), \tilde{\mathbf{E}}=\operatorname{Message-Passing}  (\tilde{\mathbf{H}}, \mathcal{G}_{hom}),\\
    &\hat{\mathbf{H}}=\operatorname{MLP}(\mathbf{X}), \hat{\mathbf{E}}=\operatorname{Message-Passing}  (\hat{\mathbf{H}}, \mathcal{G}_{het}),
    \end{aligned}
\end{equation}
where $\mathcal{G}_{hom}$ indicates the homogeneous graph structure that connects nodes within the same type, and $\mathcal{G}_{het}$ indicates the heterogeneous graph structure that connects nodes from different types. The homogeneous and heterogeneous graph structures generally come from pre-defined meta-paths and the given set of edges $\mathcal{E}$ in the heterogeneous graph, respectively.
\end{definition} 

\subsection{Generalization Bound of Prompt-tuning-based Methods}

Given pre-trained HGNN models, existing ``pre-train, prompt-tuning'' methods typically freeze the model parameters and design tunable prompts to modify the input and capture task-related information \citep{yu2024hgprompt}. Despite effectiveness, existing methods generally rely on heuristically designed prompts and lack a unified theoretical framework to further improve the generalization ability. As a result, during the prompt-tuning stage, most existing methods ignore graph structures and may suffer from limited labeled data, leading to inferior model generalization. To address this issue, we derive the generalization error bound for existing prompt-tuning-based methods (\ie HGPrompt \citep{yu2024hgprompt} and HetGPT \citep{ma2023hetgpt}) based on the classical regime of the generalization bound theory \citep{arora2018stronger, aghajanyan2021intrinsic} (proofs are provided in Appendix \ref{proof24}).


\begin{theorem}
\label{thm24}
    (Generalization error bound for prompt-tuning-based methods.) Statistically, the upper bound $\mathcal{U}(\mathcal{E}_M)$ of the test error $\mathcal{E}_M$ of a pre-trained HGNN model with prompt-tuning can be determined as follows:
    \begin{equation}
        \mathcal{U}(\mathcal{E}_{M})=\hat{\mathcal{E}}_M(\mathcal{D}_{n_{M}}, \mathcal{P}_{M})+O(\sqrt{\left|\mathcal{P}_{M}\right| / n_{M}}),
    \end{equation}
    where training data $\mathcal{D}_{n_{M}}$ and  prompt-tuning parameters $\overline{\mathcal{P}}_{M}$ are variables of the training error $\hat{\mathcal{E}}_M$ of the model in prompt-tuning stage. The number of training samples $n_{M}$ and the size of parameter space $|\mathcal{P}_{M}|$ are variables of the generalization gap bound between training error and test error.
    Moreover, when $n_{M}$ is fixed, there exist an optimal $|\overline{\mathcal{P}}_{M}|$ to achieve the lowest upper bound for prompt-tuning-based methods, i.e.,
    \begin{equation}
        \min (\mathcal{U}(\mathcal{E}_{M}))=\hat{\mathcal{E}}_M(\mathcal{D}_{n_{M}}, \overline{\mathcal{P}}_{M})+O(\sqrt{\mid \overline{\mathcal{P} }_{M}\mid / n_{M}}).
        \end{equation}
\end{theorem}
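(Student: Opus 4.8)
The plan is to cast the statement as a standard approximation–estimation decomposition of the expected risk, instantiated for a hypothesis class that, because the pre-trained backbone is frozen, is parameterized solely by the prompt-tuning parameters $\mathcal{P}_M$. First I would fix notation by writing the test error as the population risk $\mathcal{E}_M = \mathbb{E}_{(x,y)}[\ell(f_{\mathcal{P}_M}(x),y)]$ and the training error as the empirical risk $\hat{\mathcal{E}}_M(\mathcal{D}_{n_M},\mathcal{P}_M) = \frac{1}{n_M}\sum_{i=1}^{n_M}\ell(f_{\mathcal{P}_M}(x_i),y_i)$ over the $n_M$ labeled target nodes. The elementary decomposition $\mathcal{E}_M = \hat{\mathcal{E}}_M + (\mathcal{E}_M - \hat{\mathcal{E}}_M)$ then reduces the theorem to controlling the generalization gap $\mathcal{E}_M - \hat{\mathcal{E}}_M$ by a quantity of order $\sqrt{|\mathcal{P}_M|/n_M}$.

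The central step is to bound this gap by a complexity term that depends only on the number of tunable prompt parameters. Following the compression / intrinsic-dimension regime \citep{arora2018stronger, aghajanyan2021intrinsic}, I would argue that freezing the backbone confines the effective hypothesis class to a family indexed by $\mathcal{P}_M$, so that a covering-number (or PAC-Bayes) argument yields $\log N \lesssim |\mathcal{P}_M|$ up to logarithmic factors; a uniform-convergence bound then gives, with high probability, $\mathcal{E}_M - \hat{\mathcal{E}}_M \le O(\sqrt{|\mathcal{P}_M|/n_M})$. Substituting this into the decomposition produces the claimed upper bound $\mathcal{U}(\mathcal{E}_M)=\hat{\mathcal{E}}_M(\mathcal{D}_{n_M},\mathcal{P}_M)+O(\sqrt{|\mathcal{P}_M|/n_M})$.

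For the second assertion I would treat $\mathcal{U}$ as a function of the parameter-space size $p := |\mathcal{P}_M|$ with $n_M$ held fixed, writing $u(p) = \hat{\mathcal{E}}_M(p) + C\sqrt{p/n_M}$. The training term $\hat{\mathcal{E}}_M(p)$ is non-increasing in $p$, because enlarging the prompt-tuning parameter space yields a nested, strictly more expressive family that can only lower the minimized empirical risk, while the gap term $C\sqrt{p/n_M}$ is strictly increasing. Since $u$ is the sum of a bounded monotone-decreasing term and a continuous monotone-increasing term on the admissible range of $p$, the extreme-value theorem (or, for discrete $p$, finiteness of the candidate set) guarantees a minimizer $\overline{\mathcal{P}}_M$ at which the two competing effects balance; evaluating $u$ at this argument gives the stated $\min(\mathcal{U}(\mathcal{E}_M))$.

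The main obstacle I anticipate is the rigorous justification of the gap bound: target nodes in a heterogeneous graph are not i.i.d. owing to message passing along $\mathcal{G}_{hom}$ and $\mathcal{G}_{het}$, so the standard i.i.d. concentration inequalities must be replaced by transductive (or dependency-aware) uniform-convergence arguments, and one must verify that the frozen backbone contributes only a data-independent constant rather than its full parameter count to the complexity. A secondary technical point is supplying the monotonicity and regularity assumption on $\hat{\mathcal{E}}_M(p)$ needed to make ``there exists an optimal $|\overline{\mathcal{P}}_M|$'' precise rather than merely heuristic.
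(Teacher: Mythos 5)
Your proposal is correct and shares the same skeleton as the paper's proof---decompose the test error into the prompt-tuning empirical risk plus a generalization gap, bound the gap by a parameter-counting term of order $\sqrt{|\mathcal{P}_M|/n_M}$, and obtain the optimal $|\overline{\mathcal{P}}_M|$ from the trade-off between a shrinking training error and a growing gap---but the middle step and two details genuinely differ. For the gap, the paper does not use covering numbers or PAC-Bayes: its key Lemma~\ref{thm11} is the most elementary instantiation of the classical regime, a union bound over a \emph{finite} hypothesis space combined with Hoeffding's inequality, yielding $\sqrt{(\ln|\mathcal{H}|+\ln(2/\delta))/(2n)}$, after which $\ln|\mathcal{H}|$ is informally identified with $|\mathcal{P}|$; your covering-number route is more general (it handles continuous prompt parameters without discretization) at the cost of heavier machinery, and both arguments are heuristic at exactly the same point, namely equating log-complexity with the raw parameter count. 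Second, the paper converts the generic training error into the prompt-tuning-stage error $\hat{\mathcal{E}}_M$ by introducing a ``pre-training benefit'' $B=\hat{\mathcal{E}}(\mathcal{D}_{n_M},\mathcal{P}_M)-\hat{\mathcal{E}}_M(\mathcal{D}_{n_M},\mathcal{P}_M)$ and subtracting it from the bound, whereas you get the same effect more directly by noting that the frozen backbone confines the hypothesis class to one indexed by $\mathcal{P}_M$ alone---arguably the cleaner formulation. Third, for existence of the optimum, the paper differentiates $\mathcal{U}(\mathcal{E}_M)$ in $|\mathcal{P}_M|$ and asserts the negative derivative of the training error and the positive derivative of the gap must balance at some $|\overline{\mathcal{P}}_M|$; your extreme-value/monotonicity argument is more robust, since it needs neither differentiability nor that the derivative actually vanish. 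Finally, your concern about non-i.i.d.\ target nodes is well placed but is not resolved by the paper either: its Hoeffding step silently assumes i.i.d.\ training samples, so the transductive issue you flag is an unaddressed gap in both arguments rather than a defect unique to yours.
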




Theorem \ref{thm24} indicates that the lowest generalization error bound of existing prompt-tuning-based methods exists and consists of two parts, \ie the training error $\hat{\mathcal{E}}_M$ of the model in the prompt-tuning stage, and the generalization gap bound $O(\sqrt{\mid \overline{\mathcal{P} }_{M}\mid / n_{M}})$. As a result, we theoretically derive the generalization error bound of existing prompt-tuning-based methods based on the training error and the generalization gap, thus solving the challenge (i). 


\subsection{Dual Structure-Aware Adapters}
 
Intuitively, based on Theorem \ref{thm24}, when $n_M$ is fixed, encouraging the parameters $\mathcal{P}_{M}$ to approach the optimal parameters $\overline{\mathcal{P}}_{M}$ can enable the model to approach the lowest generalization error bound. However, most existing prompt-tuning-based methods focus on designing prompts for node features while ignoring the graph structures that contain task-related information in the heterogeneous graph \citep{ma2023hetgpt, yu2024hgprompt}. 
As a result, the parameters in existing prompt-tuning-based methods may be insufficient to model the input data effectively, leading to the increased training error $\hat{\mathcal{E}}_M$. To solve this issue, in this paper, we propose to design dual structure-aware adapters to model both node features as well as homogeneous and heterogeneous graph structures, thereby making the parameters $\mathcal{P}_{M}$ closer to the optimal  parameters $\overline{\mathcal{P}}_{M}$ to improve the generalization ability.

To do this, for the homogeneous branch in pre-trained HGNNs, we design a homogeneous adapter to capture the task-related homogeneous structural information (\ie the connections among nodes within the same type). 
 Specifically, the homogeneous adapter includes two steps (\ie mapping and message-passing), which tune node features and the homogeneous graph structure, respectively. In the mapping step, we first employ the MLP $f_\delta$ to obtain the mapped representations $\mathbf{F}$ for the frozen representations $\tilde{\mathbf{H}}$ before message-passing, \ie 
\begin{equation}
    \mathbf{F} = \operatorname{ReLU} (\tilde{\mathbf{H}}\mathbf{W}_\delta), 
\end{equation}
where $\mathbf{W}_\delta \in \mathbb{R}^{d \times d'}$ indicates the trainable parameters of $f_\delta$, and $\operatorname{ReLU}(\cdot)$ indicates the ReLU activation function. We further follow the lightweight principle \citep{hu2021lora, chen2022adaptformer} to decrease the number of parameters by decomposing
$\mathbf{W}_\delta$ into two low-rank matrices, \ie $\mathbf{W}_\delta = \mathbf{W}_{down} \mathbf{W}_{up}$, where $\mathbf{W}_{down} \in \mathbb{R}^{d \times t}$,  $\mathbf{W}_{up} \in \mathbb{R}^{t \times d'}$, and $t \ll d, d'$. 

In the message-passing step, we focus on tuning the homogeneous graph structure to capture the task-related homogeneous structural information, \ie assign large weights for node pairs within the same class while assigning small weights for node pairs from different classes. However, the homogeneous graph structure is typically not provided in the heterogeneous graph, and previous methods  generally employ meta-paths to build it, which requires expert knowledge and incurs large computation costs \citep{jing2021hdmi,wang2023heterogeneous}.  To alleviate this issue, we propose to tune the homogeneous graph structure adaptively by calculating the similarity weight $\tilde{\mathbf{a}}_{i,j}$ between representations of nodes $v_i$ and $v_j$ from the same node type, \ie
\begin{equation}
\tilde{\mathbf{a}}_{i,j}=\frac{(\tilde{\mathbf{h}}_i\mathbf{W}_\vartheta ) \cdot (\tilde{\mathbf{h}}_{j}\mathbf{W}_\vartheta)^{T}}{\|\tilde{\mathbf{h}}_{i}\mathbf{W}_\vartheta\|\|\tilde{\mathbf{h}}_{j}\mathbf{W}_\vartheta\|},
\end{equation}
where $\tilde{\mathbf{h}}_{i}$ and $\tilde{\mathbf{h}}_{j}$ indicate the frozen representations of nodes $v_i$ and $v_j$ before message-passing, and $\mathbf{W}_\vartheta \in \mathbb{R}^{d \times d'}$ indicates the trainable parameters that also can be decomposed into low-rank matrices. To improve the performance and reduce computation costs, we only calculate $\tilde{\mathbf{a}}_{i,j}$ between node $v_i$ and its $k$ nearest neighbors to obtain a sparse graph structure $\tilde{\mathbf{A}}$. After that, we can further derive the positive and symmetric graph structure $\mathbf{A}\in \mathbb{R}^{n \times n}$ by
\begin{equation}
    \mathbf{A}=\frac{\operatorname{ReLU}(\tilde{\mathbf{A}})+\operatorname{ReLU}(\tilde{\mathbf{A}})^{T}}{2},
    \label{eq6}
\end{equation}
where $n$ indicates the number of target nodes from the same node type.
 
In Eq. (\ref{eq6}), a larger value of the element $\mathbf{a}_{i,j}$ indicates a stronger correlation between nodes $v_i$ and $v_j$, suggesting that they are more likely to belong to the same class.  Based on the mapped representations $\mathbf{F}$ and the adaptively learned homogeneous graph structure $\mathbf{A}$, we conduct the message-passing among nodes that are likely to belong to the same class and obtain the adapted representations $\Tilde{\mathbf{F}}$, \ie $\Tilde{\mathbf{F}} = \mathbf{A} \mathbf{F}$.
As a result, the adapted representations $\Tilde{\mathbf{F}}$ are expected to capture the adaptive homogeneous structural information and aggregate information from nodes within the same class.
Subsequently, we can obtain the homogeneous representations $\tilde{\mathbf{Z}}$ by summing the adapted representations $\Tilde{\mathbf{F}}$  from the homogeneous adapter and the frozen representations $\tilde{\mathbf{E}}$ after message-passing, \ie 
\begin{equation}
\tilde{\mathbf{Z}} = \tilde{\mathbf{E}} + \alpha \Tilde{\mathbf{F}},
\label{eq7}
\end{equation}
where $\alpha$ is non-negative parameter. 

In addition to the homogeneous adapter, we further propose to design a heterogeneous adapter for the heterogeneous branch to capture the task-related heterogeneous structural information (\ie the connections among nodes from different types). Similarly, the heterogeneous adapter also consists of mapping and message-passing steps. Specifically, in the mapping step, we first employ the MLP $f_\theta$ to obtain the mapped representations $\mathbf{M}$ for the frozen representations $\hat{\mathbf{H}}$ before message-passing, \ie
\begin{equation}
    \mathbf{M} = \operatorname{ReLU} (\hat{\mathbf{H}}\Theta _{down}\Theta _{up}),
\end{equation}
where $\Theta _{down} \in \mathbb{R}^{d \times t'}$ and  $\Theta _{up} \in \mathbb{R}^{t' \times d'}$ indicate the trainable parameters of $f_\theta$, and $t' \ll d, d'$.

In the message-passing step, we focus on tuning the heterogeneous graph structure to capture the task-related structural information, \ie assign large weights for important neighbors while assigning small weights for unimportant neighbors. To do this, we propose to calculate the weight $\mathbf{s}_{i,r}$ for the $r$-th type of neighbor of each target node $v_i$ with the score function, thereby obtaining the heterogeneous graph structure $\mathbf{S} \in \mathbb{R}^{n \times |\mathcal{R}|}$, \ie
\begin{equation}
    \mathbf{s}_{i,r}=\frac{\exp (\operatorname{Tanh}(\hat{\mathbf{h}}_{i,r} \mathbf{W}_{\epsilon}))}{\sum_{r'=1}^{|\mathcal{R}|} \exp (\operatorname{Tanh}(\hat{\mathbf{h}}_{i,r'} \mathbf{W}_{\epsilon}))},
    \label{eq9}
\end{equation}
where $\hat{\mathbf{h}}_{i,r}$ indicates the representation of the $r$-th type of neighbor for node $v_i$, $\mathbf{W}_\epsilon \in \mathbb{R}^{d \times 1}$ indicates the trainable parameters of the score function, $\operatorname{Tanh}(\cdot)$ denotes the Tanh activation function, and $|\mathcal{R}|$ denotes the number of edges types.  

In Eq. (\ref{eq9}), a larger value of the element $\mathbf{s}_{i,r}$ indicates that the $r$-th type of neighbor is more important for node $v_i$.
With the mapped representations $\mathbf{M}$ and the adaptively learned heterogeneous graph structure $\mathbf{S}$, we conduct the message-passing among nodes from different types and obtain the adapted representations $\hat{\mathbf{M}}$, \ie
$\hat{\mathbf{m}}_i=\sum_{r=1}^{|\mathcal{R}|} \mathbf{s}_{i,r} \mathbf{m}_{i,r}$, where $\mathbf{m}_{i,r}$ indicates the mapped representation of the $r$-th type of neighbor for node $v_i$.
As a result, the adapted representations $\hat{\mathbf{M}}$ are expected to capture the heterogeneous structural information and aggregate information from important heterogeneous neighbors with large weights. Subsequently, we can obtain the heterogeneous representations $\hat{\mathbf{Z}}$ by summing the adapted representations $\hat{\mathbf{M}}$ from the heterogeneous adapter and the frozen representations $\hat{\mathbf{E}}$ after message-passing, \ie 
\begin{equation}
    \hat{\mathbf{Z}} =  \hat{\mathbf{E}}+ \beta \hat{\mathbf{M}},
    \label{eq10}
\end{equation}
where $\beta$ is a non-negative parameter. 

As a result, the dual structure-aware adapters tune node features as well as homogeneous and heterogeneous graph structures simultaneously, thereby capturing more task-related structural information than existing prompt-tuning-based methods. We then concatenate homogeneous representations $\tilde{\mathbf{Z}}$ and heterogeneous representations $\hat{\mathbf{Z}}$ to obtain the final node representations $\mathbf{Z}$. Consequently, the final node representations $\mathbf{Z}$ aggregate information from nodes within the same class as well as important neighbors from other node types. This enables the model to better fit the input data and get closer to the optimal parameters $\overline{\mathcal{P}}_{M}$, thus reducing the training error and improving the model's generalization ability to solve the challenge (ii) (verified in Appendix \ref{structure_tuning}). 


\subsection{Potential Labeled Data Extension}

Designing the dual structure-aware adapters encourages the parameters $\mathcal{P}_M$ to approach the optimal parameters $\overline{\mathcal{P}}_M$, thereby decreasing the training error $\hat{\mathcal{E}}_M$ and approaching the lowest generalization error bound. Actually, based on Theorem \ref{thm24}, there is another way to further decrease the generalization error bound, \ie increasing the number of labeled data (\ie $n_M$) in training set to decrease the generalization gap bound $O(\sqrt{\mid \overline{\mathcal{P}}_{M}\mid / n_{M}})$. However, obtaining a large number of labeled data is challenging and costly in real scenarios. To solve this issue, in this paper, we design a label-propagated contrastive loss and two self-supervised losses, extending all unlabeled nodes as the potential labeled data to further improve the model's generalization ability.


Specifically, we first propose to conduct the label propagation for the unlabeled data based on the adaptively tuned homogeneous graph structure $\mathbf{A}$ and the label matrix $\mathbf{Y}$, \ie
\begin{equation}
    \Tilde{\mathbf{Y}} = \mathbf{A} \mathbf{Y},
    \label{eq11}
\end{equation}
where $\mathbf{Y}$ consists of one-hot label indicator vectors for labeled nodes and zero vectors for unlabeled nodes. As a result, Eq. (\ref{eq11}) encourages the label propagation from the labeled nodes and tends to assign the same label for unlabeled nodes within the same class.
 Note that we utilize the homogeneous graph structure for the label propagation instead of the heterogeneous graph structure, as the label propagation occurs only among target nodes of the same type.

Given the propagated labels $\Tilde{\mathbf{Y}}$, a common approach is to train a classifier that maps node representations $\mathbf{Z}$ to a probability matrix and then computes the cross-entropy loss between the label matrix and the probability matrix. However, the cross-entropy loss typically differs from the objective function used in the pre-training stage, leading to the gap between pre-trained models and downstream tasks \citep{jing2021hdmi, yu2024multigprompt}. 
Fortunately, existing literature indicates that any standard contrastive pre-training task on graphs can be reformulated as the objective function based on subgraph similarity \citep{liu2023graphprompt, yu2024generalized}. 
Therefore, we propose to bridge the gap between different pre-trained models and downstream tasks by reformulating the downstream objective function according to the class-subgraph similarity. 

To do this, we first employ a projection $g_\rho$ to map node representations $\mathbf{Z}$, resulting in the prediction matrix $\mathbf{P}$ of all nodes, \ie $\mathbf{P} = \mathbf{Z}\mathbf{W}_\rho$,
where $\mathbf{W}_\rho \in \mathbb{R}^{d' \times c}$ is the trainable parameters of $g_\rho$, and $c$ denotes the number of classes. 
We then consider nodes with the same propagated label to be in the same subgraph, and we can further obtain the class-subgraph prediction $\mathbf{c}_{\tilde{\mathbf{y}}_i}$ by averaging the prediction vectors of nodes whose original labels equal to $\tilde{\mathbf{y}}_i$.   
After that, we propose a contrastive loss based on the subgraph similarity to incorporate supervision signals by enforcing the node prediction close to its class-subgraph prediction while far away from different class-subgraph predictions, \ie
\begin{equation}
    \mathcal{L}_{con}=-\sum_{i}^{\tilde{\mathbf{Y}}_{L}} \ln \frac{ \exp (\operatorname{sim}(\mathbf{p}_{i}, \mathbf{c}_{\tilde{\mathbf{y}}_i}) / \tau)}{\sum_{\tilde{\mathbf{y}}_i\ne\tilde{\mathbf{y}}_j } \exp (\operatorname{sim}(\mathbf{p}_{i}, \mathbf{c}_{\tilde{\mathbf{y}}_j}) / \tau)}
-\lambda\sum_{i}^{\tilde{\mathbf{Y}}_{UL}} \ln \frac{ \exp (\operatorname{sim}\left(\mathbf{p}_{i}, \mathbf{c}_{\tilde{\mathbf{y}}_i}\right) / \tau)}{\sum_{\tilde{\mathbf{y}}_i\ne\tilde{\mathbf{y}}_j } \exp (\operatorname{sim}(\mathbf{p}_{i}, \mathbf{c}_{\tilde{\mathbf{y}}_j}) / \tau)},
    \label{eq12}
\end{equation}
where $\lambda$ is a non-negative parameter,  $\tilde{\mathbf{Y}}_{L}$ and $\tilde{\mathbf{Y}}_{UL}$ indicate the sets of node indices with and without original labels, respectively, sim$(\cdot)$ denotes a similarity function between two vectors, and $\tau$ is a temperature parameter. Consequently, Eq. (\ref{eq12}) simulates objective functions in different pre-trained HGNN models based on the subgraph similarity, thereby bridging the gap between pre-trained models and downstream tasks. This facilitates the application of the proposed method to different pre-trained HGNN models. Moreover, Eq. (\ref{eq12}) encourages the prediction vectors of both labeled and unlabeled nodes to effectively capture the class information, potentially increasing the number of labeled data to further decrease the model's generalization error bound.





Although the label-propagated contrastive loss increases the number of labeled data potentially, the confidence of propagated labels of unlabeled nodes in $\Tilde{\mathbf{Y}}$ may be insufficient during the early stages of training due to the gradual optimization of the graph structure $\mathbf{A}$. This may hinder the optimization of the dual adapters, \ie  the homogeneous and heterogeneous graph structures of labeled nodes may be better tuned than that of unlabeled nodes. To address this issue, we start with the essence of homogeneous and heterogeneous graph structures, and then design two self-supervised losses for the dual adapters and treat both labeled and unlabeled nodes as equal supervision signals.

Intuitively, for nodes within the same type, a good homogeneous graph structure $\mathbf{A}$ should connect nodes from the same class while disconnecting nodes from different classes.  Moreover, existing literature suggests that nodes with similar node features generally come from the same class \citep{liu2022towards, wu2023homophily}. 
For instance, in an academic heterogeneous graph, if the features of two ``paper"  nodes share many keywords, they are likely to belong to the same class.  Therefore, there is an intuitive way to optimize the homogeneous graph structure by connecting nodes that share similar node features while disconnecting nodes with dissimilar node features. 
To do this, we design a feature reconstruction loss to align node features before and after the message-passing, thus optimizing the graph structure $\mathbf{A}$ to assign appropriate weights to node pairs within the same class, \ie
\begin{equation}
        \mathcal{L}_{rec}=-\sum_{i =1}^{m} \sum_{j=1}^{f} \mathbf{x}_{i,j} \ln (\mathbf{A}\mathbf{X})_{i,j},
    \label{eq13}
\end{equation}
where $m$ and $f$ denote the number of sampled nodes and the feature dimension, respectively. Eq. (\ref{eq13}) enforces the reconstructed node features after message-passing to be aligned with the original node features, which requires that message-passing occurs only among nodes with similar node features. Therefore,  Eq. (\ref{eq13}) encourages the graph structure $\mathbf{A}$ to connect nodes within the same class while disconnecting nodes from different classes as much as possible to optimize it.

Different from tuning the homogeneous graph structure that connects nodes within the same type, the heterogeneous graph structure connects nodes from different types. Therefore, the feature reconstruction may not be suitable for optimizing the heterogeneous graph structure $\mathbf{S}$ as the connected nodes come from different feature distributions. To alleviate this, we first analyze the significance of different heterogeneous neighbors and then optimize the weights assigned to them. 

Intuitively, if a certain type of heterogeneous neighbor of node $v_i$ provides more relevant information for identifying the node's label than other neighbors, then such neighbors are more important to the node $v_i$ and should be assigned larger weights.
That is, although heterogeneous neighbors come from different feature distributions, their representations may still partially overlap with the class-subgraph representation $\mathbf{c}_{\tilde{\mathbf{y}}_i}$ of the node $v_i$ to provide class-related information. Furthermore, the degree of overlap is greater for important neighbors than unimportant neighbors.
Therefore, this paper designs a margin loss to relatively narrow the distance between the class-subgraph representation $\mathbf{c}_{\tilde{\mathbf{y}}_i}$ and the adapted representation $\hat{\mathbf{m}}_{i}$, thus optimizing the graph structure $\mathbf{S}$ to assign large weights for important heterogeneous neighbors, \ie
\begin{equation}
    \mathcal{L}_{mar}=\sum_{i,j=1,\tilde{\mathbf{y}}_i \ne \tilde{\mathbf{y}}_j}^{n}\left\{d(\mathbf{c}_{\tilde{\mathbf{y}}_i }, \hat{\mathbf{m}}_i)^{2}-d(\mathbf{c}_{\tilde{\mathbf{y}}_i }, \hat{\mathbf{m}}_{j})^{2}+\gamma\right\}_{+},
    \label{eq14}
\end{equation}
where $\{\cdot\}_+ = \max\{\cdot,0\}$, $d(\cdot)$ indicates a distance measurement between two vectors, 
and $\gamma$ is a non-negative parameter.
Eq. (\ref{eq14}) aims to decrease the distance $d(\mathbf{c}_{\tilde{\mathbf{y}}_i }, \hat{\mathbf{m}}_i)^{2}$ while increasing the distance $d(\mathbf{c}_{\tilde{\mathbf{y}}_i }, \hat{\mathbf{m}}_{j})^{2}$, thus ensuring the ``safe"  margin  $\gamma$ between them.  Therefore, the adapted representation  $\hat{\mathbf{m}}_i=\sum_{r=1}^{|\mathcal{R}|} \mathbf{s}_{i,r} \mathbf{m}_{i,r}$ is encouraged to preserve its class-related information that shared with $\mathbf{c}_{\tilde{\mathbf{y}}_i }$. This optimizes the heterogeneous graph structure $\mathbf{S}$ to assign large weights to important neighbors that contain more class-related information. Note that the margin loss only requires the relative distance to exceed $\gamma$, unlike traditional contrastive losses (\eg InfoNCE \citep{oord2018representation}) that enforce the alignment between the class-subgraph representation and the adapted representation. Such direct alignment may be inappropriate, as these representations come from different feature distributions (verified in Appendix \ref{ablation_loss}).

Finally, we integrate the contrastive loss in Eq. (\ref{eq12}),  the feature reconstruction loss in Eq. (\ref{eq13}), and the margin loss in Eq. (\ref{eq14}) to obtain the objective function of the proposed method, \ie 
\begin{equation}
    \mathcal{J} = \mathcal{L}_{con} + \eta  \mathcal{L}_{rec} + \mu  \mathcal{L}_{mar},
    \label{eq15}
\end{equation}
where $\eta $ and $ \mu$ are non-negative parameters.
With the objective function in Eq. (\ref{eq15}), the proposed method optimizes the dual adapters and extends the potential labeled data by incorporating both labeled and unlabeled nodes as supervision signals. As a result, this decreases the generalization gap and thus further decreases the generalization error bound of existing methods to solve the challenge (iii) (verified in Appendix \ref{Labeled_Data_Extension}). Actually, we can derive the generalization error bound for the proposed HG-Adapter by the following Theorem, whose proof can be found in Appendix \ref{proof25}.

\begin{theorem}
     (Generalization error bound for the proposed HG-Adapter.) With dual structure-aware adapters and potential labeled data extension, the proposed HG-Adapter decreases  both the training error and the generalization gap to achieve a lower generalization error bound $\mathcal{U}(\mathcal{E}_{A})_{n_A}$ than that of existing prompt-tuning-based methods (i.e., $\mathcal{U}(\mathcal{E}_{M})_{n_M}$), i.e., 
\begin{equation}
        \mathcal{U}(\mathcal{E}_{A})_{n_A}< \mathcal{U}(\mathcal{E}_{A})_{n_M}<\mathcal{U}(\mathcal{E}_{M})_{n_M},
\end{equation}
where $n_A$ indicates the number of training data for the proposed HG-Adapter.
\label{thm25}
\end{theorem}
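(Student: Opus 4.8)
The plan is to establish the two-step chain separately, reusing the additive decomposition supplied by Theorem \ref{thm24}. Both the prompt-tuning baseline and the proposed HG-Adapter are instances of the same pre-trained HGNN framework, so each admits a bound of the form ``training error plus generalization gap.'' Writing the adapter bound as $\mathcal{U}(\mathcal{E}_A)_{n} = \hat{\mathcal{E}}_A(\mathcal{D}_{n}, \mathcal{P}_A) + O(\sqrt{|\mathcal{P}_A|/n})$ in analogy with Theorem \ref{thm24}, the target chain $\mathcal{U}(\mathcal{E}_{A})_{n_A}< \mathcal{U}(\mathcal{E}_{A})_{n_M}<\mathcal{U}(\mathcal{E}_{M})_{n_M}$ splits into two comparisons: the middle-versus-right inequality at the fixed sample size $n_M$ (attributable to the adapters), and the left-versus-middle inequality for the fixed model $A$ when the effective sample size grows from $n_M$ to $n_A$ (attributable to the labeled-data extension). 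I would prove each link by isolating the single term of the bound it controls.

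For the right inequality $\mathcal{U}(\mathcal{E}_A)_{n_M} < \mathcal{U}(\mathcal{E}_M)_{n_M}$, I would argue on the training-error term. Since the dual structure-aware adapters tune node features together with both the homogeneous and heterogeneous graph structures, whereas the prompt-tuning baseline tunes node features alone, the adapter parameters $\mathcal{P}_A$ can approach the optimal parameters $\overline{\mathcal{P}}_M$ more closely than the prompt parameters $\mathcal{P}_M$ do, yielding a strict drop $\hat{\mathcal{E}}_A(\mathcal{D}_{n_M}, \mathcal{P}_A) < \hat{\mathcal{E}}_M(\mathcal{D}_{n_M}, \mathcal{P}_M)$. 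The delicate point is the generalization-gap term, because introducing adapters could enlarge $|\mathcal{P}_A|$ and thereby inflate $O(\sqrt{|\mathcal{P}_A|/n_M})$. I would control this via the low-rank lightweight design $\mathbf{W}_\delta = \mathbf{W}_{down}\mathbf{W}_{up}$ with $t \ll d, d'$ (and analogously $\Theta_{down}\Theta_{up}$ with $t' \ll d, d'$), so that the added parameter count stays small relative to the strict decrease in training error, leaving the summed bound strictly smaller.

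For the left inequality $\mathcal{U}(\mathcal{E}_A)_{n_A} < \mathcal{U}(\mathcal{E}_A)_{n_M}$, I would hold the model fixed and exploit the potential labeled data extension. The label-propagated contrastive loss $\mathcal{L}_{con}$ together with the two self-supervised losses $\mathcal{L}_{rec}$ and $\mathcal{L}_{mar}$ incorporate the unlabeled nodes as additional supervision, so the effective number of training samples increases to $n_A > n_M$. Since the gap term $O(\sqrt{|\mathcal{P}_A|/n})$ is monotonically decreasing in $n$, it follows immediately that $O(\sqrt{|\mathcal{P}_A|/n_A}) < O(\sqrt{|\mathcal{P}_A|/n_M})$. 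To close the inequality I would additionally verify that the training error does not increase on the extended set, which holds provided the propagated pseudo-labels are sufficiently accurate; the reliability enforced by $\mathcal{L}_{rec}$ and $\mathcal{L}_{mar}$ on the tuned structures keeps the pseudo-labeled nodes behaving like genuine labeled samples.

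The main obstacle I anticipate is making the right inequality rigorous rather than heuristic: namely, proving that modeling graph structure strictly lowers the training error while simultaneously bounding the increase in effective parameter size, so that the gain is not cancelled by a larger generalization gap. This demands a quantitative statement that the richer adapter hypothesis class reduces the optimization residual by more than the low-rank factors add to $\sqrt{|\mathcal{P}_A|/n_M}$, which I would handle by treating the adapters as a controlled perturbation of the frozen backbone and bounding the extra complexity through the rank budgets $t$ and $t'$. By contrast, the left inequality is comparatively routine once the pseudo-label quality assumption is granted, since it rests solely on the monotonicity of the gap term in $n$.
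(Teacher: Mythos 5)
Your decomposition of the chain into two links is exactly the paper's, and your proof of the left link $\mathcal{U}(\mathcal{E}_A)_{n_A} < \mathcal{U}(\mathcal{E}_A)_{n_M}$ coincides with the paper's argument: the data extension gives $n_A > n_M$, and the gap term $O(\sqrt{|\mathcal{P}_A|/n})$ is monotone decreasing in $n$ (you are in fact slightly more careful than the paper, which never discusses whether the training error could move when the training set is enlarged). The problem is the right link, $\mathcal{U}(\mathcal{E}_A)_{n_M} < \mathcal{U}(\mathcal{E}_M)_{n_M}$. You argue term-by-term: the adapters strictly lower the training error, and the low-rank design keeps the increase of $O(\sqrt{|\mathcal{P}_A|/n_M})$ over $O(\sqrt{|\mathcal{P}_M|/n_M})$ ``small,'' so the sum drops. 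But ``small'' is never quantified, the training-error decrease is never quantified either, and you yourself flag this comparison as the main obstacle and describe only a plan (``a controlled perturbation \ldots bounding the extra complexity through the rank budgets'') for resolving it. As written, the step ``leaving the summed bound strictly smaller'' does not follow from anything established; this is a genuine gap, not a routine detail.

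The paper closes exactly this gap by a different device, which is the idea your proposal is missing: it never compares the magnitudes of the two terms at all. Instead it uses the second half of Theorem \ref{thm24}, namely that for fixed $n_M$ the bound, viewed as a function of the parameter size $|\mathcal{P}|$, has a U-shaped (unimodal) behavior with its minimum at the optimal size $|\overline{\mathcal{P}}_M|$ (training error decreasing in $|\mathcal{P}|$, gap increasing, derivative vanishing at the optimum). It then adds the structural premise that prompt-tuning methods, precisely because they ignore graph structures, are under-parameterized: $|\mathcal{P}_M| < |\overline{\mathcal{P}}_M|$. Since the adapters add only a few parameters, $|\mathcal{P}_A|$ lies closer to the optimum, $\bigl||\mathcal{P}_A|-|\overline{\mathcal{P}}_M|\bigr| < \bigl||\mathcal{P}_M|-|\overline{\mathcal{P}}_M|\bigr|$, and on the descending branch of the U the total bound necessarily decreases as parameters are added, giving $\mathcal{U}(\mathcal{E}_A)_{n_M} - \min(\mathcal{U}(\mathcal{E}_M)) < \mathcal{U}(\mathcal{E}_M)_{n_M} - \min(\mathcal{U}(\mathcal{E}_M))$ and hence the right link. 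In other words, anchoring the comparison at $\overline{\mathcal{P}}_M$ makes the trade-off you struggled with automatic: on the under-parameterized side, the gap increase is dominated by the training-error decrease by definition of being below the optimal parameter count. Your argument could be repaired by importing this under-parameterization premise and the unimodality of the bound rather than attempting a direct quantitative comparison of the two terms.
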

Theorem \ref{thm25} indicates that the proposed method with dual adapters and labeled data extension achieves a lower generalization error bound than existing methods by decreasing the training error and the generalization gap. Consequently, the proposed method is expected to obtain better generalization and effectiveness on downstream tasks. (verified in Section \ref{effective_tasks} and Section \ref{effective_adapters}).

\section{Experiments}
\label{experiments}
In this section, we conduct experiments on four public heterogeneous graph datasets to evaluate the proposed HG-Adapter in terms of different downstream tasks (\ie node classification and node clustering), compared to  both fine-tuning-based methods and prompt-tuning-based methods\footnote{Details of experimental settings are shown in Appendix \ref{settings}. Additional experimental results are shown in Appendix \ref{additional_results}. 
}.


\begin{table*}[t]
\footnotesize
\centering
\setlength\tabcolsep{3.6pt}
\caption{Classification performance (\ie Macro-F1 and Micro-F1) on all heterogeneous graph datasets, where the best results are highlighted in bold, while improved results with the proposed HG-Adapter are underlined. The ``+” symbol indicates the integration of HG-Adapter and HetGPT with original pre-trained HGNN models.}
\begin{tabular}{lccccccccccccc}
\toprule
\multirow{2}{*}{\textbf{Method}}&\multicolumn{2}{c}{\textbf{ACM}}& \multicolumn{2}{c}{\textbf{Yelp}}& \multicolumn{2}{c}{\textbf{DBLP}}& \multicolumn{2}{c}{\textbf{Aminer}} \\
\cmidrule(r){2-3} \cmidrule(r){4-5} \cmidrule(r){6-7} \cmidrule(r){8-9}
&Macro-F1&Micro-F1&Macro-F1&Micro-F1 &Macro-F1&Micro-F1 &Macro-F1&Micro-F1\\
\midrule
HAN& 89.4$\pm$0.2 & 89.2$\pm$0.2 & 90.5$\pm$1.2 &90.7$\pm$1.4  &91.2$\pm$0.4 &92.0$\pm$0.5 &65.3$\pm$0.7&72.8$\pm$0.4\\
HGT& 91.5$\pm$0.7 & 91.6$\pm$0.6 & 89.9$\pm$0.5 &90.2$\pm$0.6  &90.9$\pm$0.6 &91.7$\pm$0.8 &64.5$\pm$0.5&71.0$\pm$0.7\\
DMGI& 89.8$\pm$0.1 & 89.8$\pm$0.1 & 82.9$\pm$0.8 &85.8$\pm$0.9  &92.1$\pm$0.2 &92.9$\pm$0.3&63.8$\pm$0.4&67.6$\pm$0.5\\
HGCML& 90.6$\pm$0.7 & 90.7$\pm$0.5 & 90.7$\pm$0.8 &91.0$\pm$0.7  &91.9$\pm$0.8 &93.2$\pm$0.7 &70.5$\pm$0.4&76.3$\pm$0.6\\
HGMAE& 90.5$\pm$0.5 & 90.6$\pm$0.7 & 90.5$\pm$0.7 &90.7$\pm$0.5  &92.9$\pm$0.5 &93.4$\pm$0.6 &72.3$\pm$0.9 &80.3$\pm$1.2\\
HGPrompt& 92.1$\pm$0.7 & 92.0$\pm$0.6 & 92.5$\pm$0.4 &92.3$\pm$0.5  &93.5$\pm$0.6 &94.0$\pm$0.7 &74.8$\pm$0.8&83.9$\pm$0.6\\
\midrule
HDMI& 90.1$\pm$0.3 & 90.1$\pm$0.3 & 80.7$\pm$0.6 &84.0$\pm$0.9  &91.3$\pm$0.2 &92.2$\pm$0.5 &65.9$\pm$0.4&71.7$\pm$0.6\\
+HetGPT& 91.0$\pm$0.7 & 90.9$\pm$0.6 & 81.4$\pm$0.4 &84.8$\pm$0.5  &91.9$\pm$0.8 &92.9$\pm$0.6 &67.1$\pm$0.3&72.9$\pm$0.4\\
+HG-Adpater& \underline{91.4$\pm$0.6} & \underline{91.5$\pm$0.7} & \underline{82.0$\pm$0.8} &\underline{85.5$\pm$1.1}  &\underline{92.4$\pm$0.5} &\underline{93.3$\pm$0.7} &\underline{67.9$\pm$0.3}&\underline{73.6$\pm$0.4}\\
\midrule
HeCo& 88.3$\pm$0.3 & 88.2$\pm$0.2 & 85.3$\pm$0.7 &87.9$\pm$0.6  &91.0$\pm$0.3 &91.6$\pm$0.2 &71.8$\pm$0.9&78.6$\pm$0.7\\
+HetGPT& 88.5$\pm$0.4 & 88.4$\pm$0.6 & 85.9$\pm$0.8 &88.6$\pm$0.9  &91.5$\pm$0.6 &92.2$\pm$0.5 &72.1$\pm$0.7&79.0$\pm$0.4\\
+HG-Adpater& \underline{89.0$\pm$0.5} & \underline{89.0$\pm$0.4} & \underline{86.4$\pm$0.6} &\underline{89.2$\pm$0.7}  &\underline{92.3$\pm$0.8} &\underline{92.6$\pm$0.6}&\underline{72.8$\pm$0.5}&\underline{79.8$\pm$0.3}\\
\midrule

HERO& 92.2$\pm$0.5 &92.1$\pm$0.7  & 92.4$\pm$0.7&92.3$\pm$0.6 & 93.8$\pm$0.6 &94.4$\pm$0.4 & 75.1$\pm$0.7 &84.5$\pm$0.9\\
+HetGPT& 92.4$\pm$0.4 & 92.2$\pm$0.3 & 92.6$\pm$0.5 &92.4$\pm$0.7  &93.8$\pm$0.4 &94.5$\pm$0.3 &75.7$\pm$0.6&85.2$\pm$0.8\\
+HG-Adapter& \underline{\textbf{92.7$\pm$0.4}} & \underline{\textbf{92.7$\pm$0.7}} & \underline{\textbf{93.1$\pm$0.6}}&\underline{\textbf{92.7$\pm$0.5}}  &\underline{\textbf{94.0$\pm$0.7}}&\underline{\textbf{94.7$\pm$0.8}} &\underline{\textbf{78.3$\pm$0.5}} &\underline{\textbf{87.1$\pm$0.6}}\\
\bottomrule
\end{tabular}
\vspace{-2mm}
\label{tabnode}
\end{table*}



\subsection{Experimental Setup}
\subsubsection{Datasets}
The used datasets include three academic datasets (\ie ACM \citep{WangJSWYCY19}, DBLP \citep{WangJSWYCY19}, and Aminer \citep{kddHuFS19}), and one business dataset (\ie Yelp \citep{lu2019relation}). 

\subsubsection{Comparison Methods}
The comparison methods include two traditional semi-supervised methods (\ie HAN \citep{WangJSWYCY19} and HGT \citep{wwwHuDWS20}), six fine-tuning-based methods (\ie DMGI \citep{DMGIParkK0Y20}, HDMI \citep{jing2021hdmi}, HeCo \citep{WangLHS21}, HGCML \citep{wang2023heterogeneous}, HGMAE \citep{HGMAE}, and HERO \citep{mo2024selfsupervised}), and two prompt-tuning-based methods (\ie  HGPrompt \citep{yu2024hgprompt} and HetGPT \citep{ma2023hetgpt}), where the pre-training and the prompt-tuning of HGPrompt are both specifically designed, while HetGPT only designs the prompt-tuning and thus can be used for different pre-trained HGNN models. 

\subsection{Results Analysis}
\label{results_analysis}
\subsubsection{Effectiveness on downstream tasks}
\label{effective_tasks}
To evaluate the effectiveness of the proposed HG-Adapter, we follow previous works \citep{fu2020magnn, ma2023hetgpt, mo2024selfsupervised} to employ node classification and node clustering as downstream tasks and report their results in Table \ref{tabnode} and  Appendix \ref{additional_results}, respectively.  Obviously, the proposed method consistently demonstrates superior performance on both node classification task and node clustering task than other comparison methods.

Specifically, first, for the node classification task, the proposed method always outperforms fine-tuning-based and prompt-tuning-based comparison methods by large margins. For instance, the proposed method on average, improves by 0.81\%, compared to the best prompt-tuning-based method (\ie HetGPT), on different pre-trained HGNN models (\ie HDMI, HeCo, and HERO). This improvement can be attributed to the dual structure-aware adapters and the potential labeled data extension, which decrease the training error and the generalization gap. Consequently, the proposed method enhances the model's generalization ability on downstream tasks to decrease the generalization error on test data, thus improving the performance of different pre-trained HGNN models.

Second, for the node clustering task, the proposed method also obtains promising improvements. For instance, the proposed method on average, improves by 2.69\%, compared to the best prompt-tuning-based method (\ie HetGPT), across different pre-trained HGNN models. This demonstrates the superiority of the proposed method, which designs a contrastive loss based on the class-subgraph similarity to ensure that nodes within the same class are close to each other, thereby enhancing the clustering performance. As a result, the effectiveness of the proposed method is verified on both node classification and node clustering downstream tasks.

\begin{figure*}[t]
\centering
\vspace{-3mm}	
        \subfigure[]{\scalebox{0.27}{\includegraphics{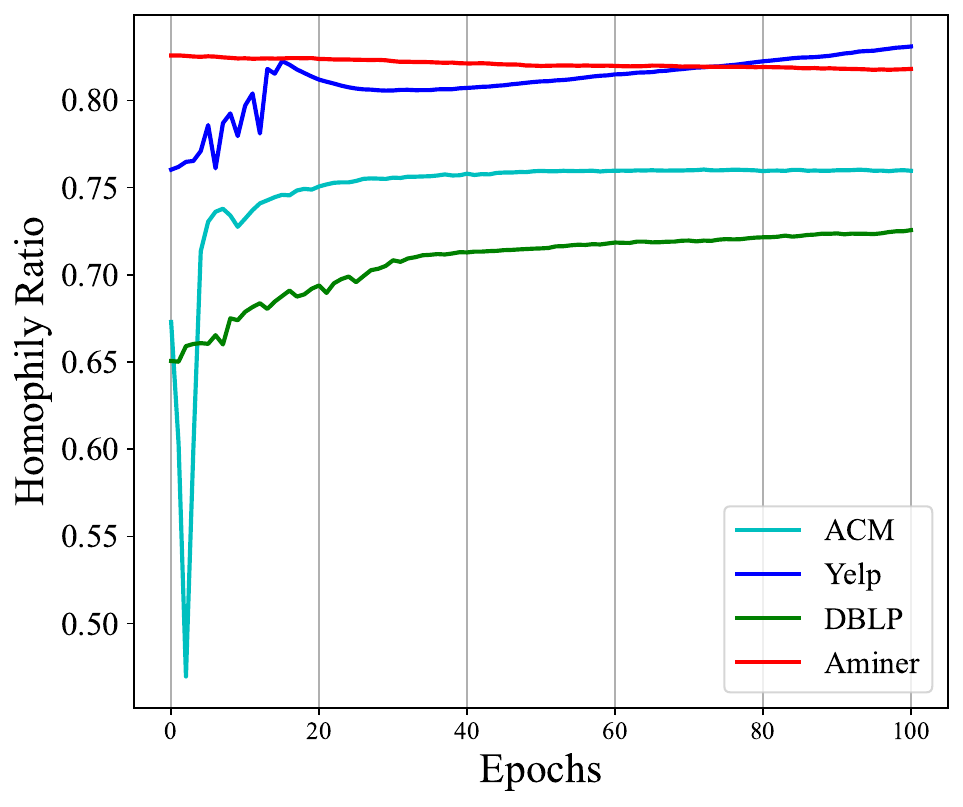}  
        }\label{verify_1a}}
        \subfigure[]{\scalebox{0.34}
{\includegraphics{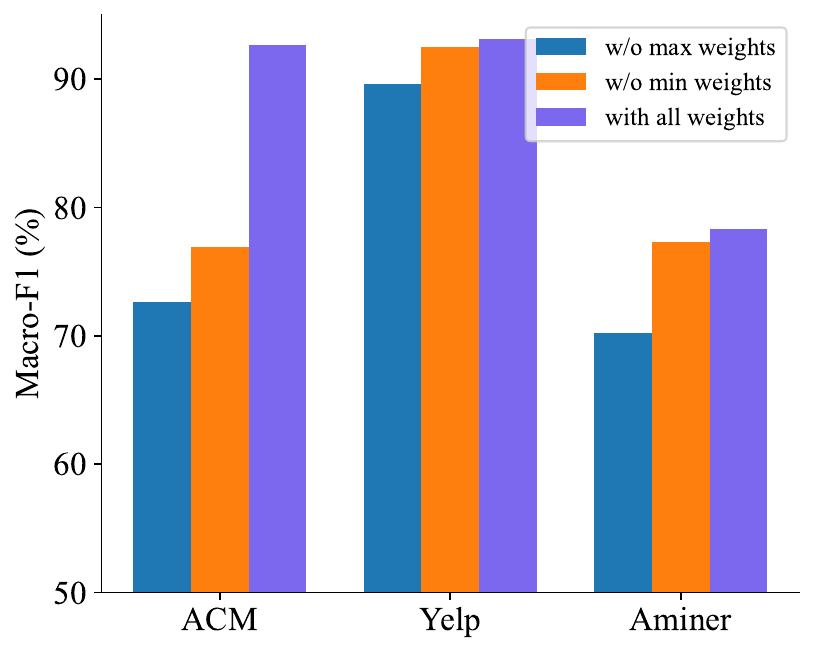}
        }\label{verify_1b}}
        \subfigure[]{\scalebox{0.27}
        {\includegraphics{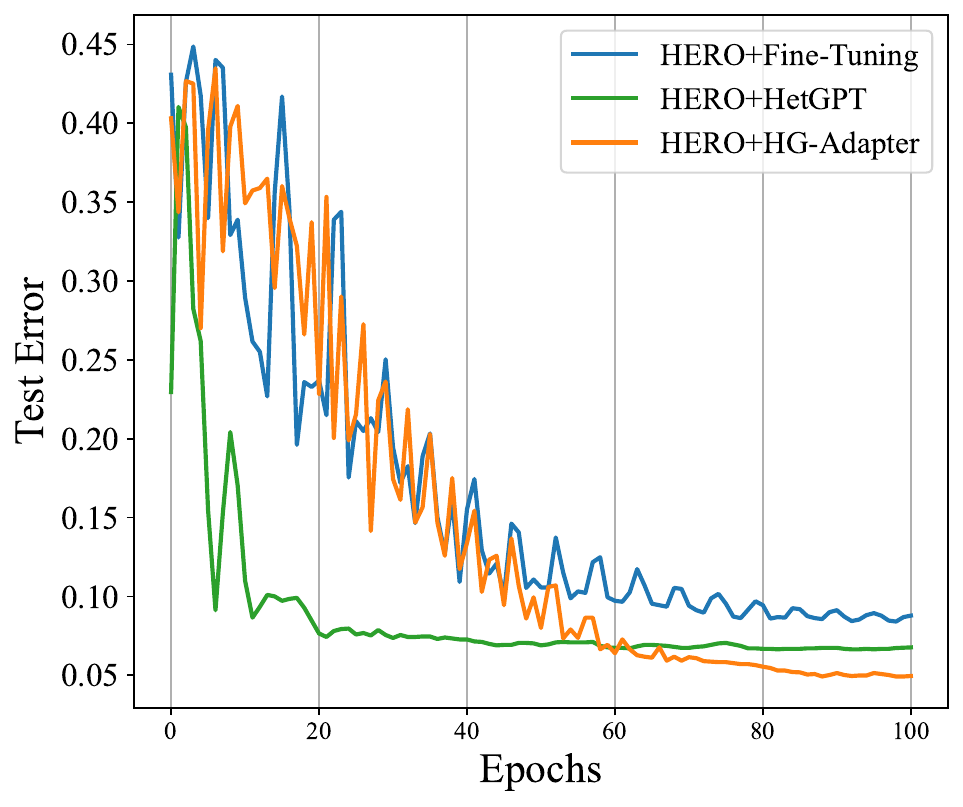}
        }\label{verify_1c}}
		\caption{(a) Homophily ratios of the homogeneous graph structure $\mathbf{A}$ learned by HERO+HG-Adapter on four datasets. (b) Node classification results without maximal/minimal weights neighbors in the heterogeneous graph structure $\mathbf{S}$ learned by HERO+HG-Adapter on three datasets (excluding the DBLP dataset, as its target node has only one type of neighbors). (c) Test errors of HERO with different tuning methods (\ie traditional fine-tuning, prompt-tuning-based HetGPT, and the proposed HG-Adapter) on the ACM dataset.}
	\vspace{-2mm}	
 \label{visandcase}
\end{figure*}

\subsubsection{Effectiveness of Dual-Adapters}
\label{effective_adapters}
To verify the effectiveness of the proposed dual structure-aware adapters, we calculate the homophily ratio (\ie the ratio of edges that connect nodes within the same class) of $\mathbf{A}$, collect the performance degradation without minimal and maximal weights neighbors in $\mathbf{S}$,  evaluate the test errors of HERO with different tuning methods, and report the results in Figure \ref{visandcase}.

First, from Figure \ref{verify_1a}, the homophily ratio of the learned homogeneous graph structure $\mathbf{A}$ increases rapidly at the beginning of training and tends to stabilize after reaching a high value. This suggests that the homogeneous adapter effectively learns $\mathbf{A}$ to build edges among nodes within the same class. As a result, the homogeneous adapter captures the task-related homogeneous structural information and promotes message-passing within the same class. Second, from Figure \ref{verify_1b}, the proposed HG-Adapter without minimal weights neighbors obtains significantly less performance degradation than without maximal weights neighbors. This demonstrates that the heterogeneous adapter effectively learns the heterogeneous graph structure $\mathbf{S}$ to assign large weights for important neighbors while assigning small weights for unimportant neighbors. Therefore, the heterogeneous adapter enables nodes to aggregate information from important neighbors, thus capturing more class-related information to benefit downstream tasks. Third, from Figure \ref{verify_1c}, the proposed HG-Adapter achieves the lowest test error than fine-tuning-based and prompt-tuning-based methods on the test data. This indicates that the proposed method indeed achieves lower generalization errors than existing methods on unlabeled data, obtaining better generalization ability on downstream tasks.

\begin{table*}[t]
\footnotesize
\centering
\setlength\tabcolsep{2.7pt}
\caption{Classification performance (\ie Macro-F1 and Micro-F1) of each component in the objective function $\mathcal{J}$ on all heterogeneous graph datasets.}
\vspace{-1mm}
\begin{tabular}{cccccccccccccc}
\toprule
\multirow{2}{*}{$\mathcal{L}_{con}$}&\multirow{2}{*}{$\mathcal{L}_{rec}$}&\multirow{2}{*}{$\mathcal{L}_{mar}$}&\multicolumn{2}{c}{\textbf{ACM}}& \multicolumn{2}{c}{\textbf{Yelp}}& \multicolumn{2}{c}{\textbf{DBLP}}& \multicolumn{2}{c}{\textbf{Aminer}} \\
\cmidrule(r){4-5} \cmidrule(r){6-7} \cmidrule(r){8-9} \cmidrule(r){10-11}
&&&Macro-F1&Micro-F1&Macro-F1&Micro-F1 &Macro-F1&Micro-F1 &Macro-F1&Micro-F1\\
\midrule
$-$&$-$&$\checkmark$&  30.2$\pm$1.1 &39.5$\pm$1.3  &50.3$\pm$0.7 &56.7$\pm$0.9 &11.9$\pm$0.5 &31.4$\pm$0.7&35.7$\pm$0.8 &67.7$\pm$1.0\\
$-$&$\checkmark$&$-$&  32.3$\pm$0.9 &40.5$\pm$0.8  &41.7$\pm$0.6 &52.2$\pm$0.5 &10.9$\pm$0.8 &30.8$\pm$0.9&35.6$\pm$0.7 &67.5$\pm$0.6 \\
$\checkmark$&$-$&$-$&  87.9$\pm$0.5 &87.8$\pm$0.7  &89.1$\pm$0.4 &89.0$\pm$0.3 &80.7$\pm$0.8 &82.6$\pm$0.7&73.2$\pm$0.5 &79.8$\pm$0.3 \\
$-$&$\checkmark$&$\checkmark$& 32.3$\pm$0.5 &40.5$\pm$0.6  &40.6$\pm$0.8 &48.2$\pm$0.7 &11.9$\pm$0.5 &31.4$\pm$0.4&35.7$\pm$0.6 &67.7$\pm$0.7\\
$\checkmark$&$-$&$\checkmark$& 89.6$\pm$0.5 &89.5$\pm$0.7  &90.0$\pm$0.6 &89.6$\pm$0.4 &91.6$\pm$0.9 &92.5$\pm$1.1&75.1$\pm$0.8 &82.2$\pm$0.7\\
$\checkmark$&$\checkmark$&$-$&  90.1$\pm$0.4 &90.0$\pm$0.5  &92.0$\pm$0.5 &91.8$\pm$0.6 &92.7$\pm$0.8 & 93.5$\pm$0.7&76.3$\pm$0.5 &84.6$\pm$0.4\\
$\checkmark$&$\checkmark$&$\checkmark$&  \textbf{92.7$\pm$0.4} & \textbf{92.7$\pm$0.7} & \textbf{93.1$\pm$0.6}&\textbf{92.7$\pm$0.5}&\textbf{94.0$\pm$0.7}&\textbf{94.7$\pm$0.8} &\textbf{78.3$\pm$0.5}&\textbf{87.1$\pm$0.6}\\
\bottomrule
\end{tabular}
\vspace{-4mm}
\label{tabablation}
\end{table*}

\subsubsection{Ablation Study}
The proposed method investigates the objective function $\mathcal{J}$ to optimize the dual adapters and extend the labeled data potentially. To verify the effectiveness of each component of $\mathcal{J}$ (\ie the contrastive loss $\mathcal{L}_{con}$, the feature reconstruction loss $\mathcal{L}_{rec}$, and the margin loss $\mathcal{L}_{mar}$), we investigate the performance of all variants on the node classification task and report the results in Table \ref{tabablation}.

From Table \ref{tabablation}, we have the observations as follows. First, the variant without $\mathcal{L}_{con}$ performs significantly worse to the other two variants (\ie without $\mathcal{L}_{rec}$ and without $\mathcal{L}_{mar}$, respectively). The reason can be attributed to the fact that the label information is necessary for the proposed HG-Adapter because it provides the ground truth for optimizing adapters and node predictions. Second, the proposed method with the complete objective function obtains the best performance. For example, the proposed method on average improves by 2.0\%,  compared to the best variant (\ie without $\mathcal{L}_{mar}$), indicating that all components in the objective function are necessary for the proposed method. This is reasonable because the feature reconstruction loss $\mathcal{L}_{rec}$ and the margin loss $\mathcal{L}_{mar}$ optimize graph structures and incorporate all unlabeled nodes as equal supervision signals, thus improving the model generalization.  These observations are consistent with our claims, \ie capturing the task-related structural information with dual adapters and extending the potential labeled data is essential for improving the effectiveness and generalization ability of pre-trained HGNN models.

\section{Conclusion}
In this paper, we derived the generalization error bound for existing prompt-tuning-based methods and then proposed a novel framework with dual structure-aware adapters and potential labeled data extension to address existing issues. Specifically, we first established the theoretical foundation by deriving the generalization error bound for existing methods based on the training error and the generalization gap. We then designed homogeneous and heterogeneous adapters to adaptively tune homogeneous and heterogeneous graph structures, thus capturing the task-related structural information to decrease the training error. Moreover, we designed a label-propagated contrastive loss and two self-supervised losses on unlabeled nodes, thus optimizing dual adapters and potentially increasing the labeled data to decrease the generalization gap. Theoretical analysis indicates that the proposed method is expected to obtain a lower generalization error bound and better generalization ability than existing prompt-tuning-based methods. Comprehensive experiments verify the effectiveness and generalization of the proposed method on various heterogeneous graph datasets and different downstream tasks. We discuss potential limitations and future directions in Appendix \ref{limitation}.

\bibliography{iclr2025_conference}
\bibliographystyle{iclr2025_conference}

\appendix
\section{Related Work}
\label{relatedwork}
This section briefly reviews topics related to this work, including fine-tuning-based pre-trained heterogeneous graph neural networks in Section \ref{related1}, and graph prompt-tuning and adapter-tuning in Section \ref{related2}.

\subsection{Fine-tuning-based Pre-trained Heterogeneous Graph Neural Networks}
\label{related1}
In recent years, inspired by the prosperity of pre-trained models in the fields of computer vision \citep{bao2021beit} and natural language processing \citep{dong2019unified}, many pre-trained heterogeneous graph neural networks (HGNNs) have been introduced for the heterogeneous graph data \citep{WangJSWYCY19, jiang2021pre}.
Generally, these pre-trained HGNN models are trained in a self-supervised fashion, facilitating the transfer of knowledge to downstream tasks through a fine-tuning step \citep{jing2021hdmi, zhu2022structure}.


Existing fine-tuning-based pre-trained HGNN methods can be broadly classified into two groups, \ie meta-path-based methods and meta-path-free methods. In meta-path-based methods, several graphs are usually constructed based on different pre-defined meta-paths to examine diverse relationships among nodes that share similar labels \citep{jing2021hdmi, zhu2022structure}.
For example, STENCIL \citep{zhu2022structure} and HDMI \citep{jing2021hdmi} construct meth-path-based graphs and then conduct node-level consistency constraints (\eg contrastive loss) between node representations in different graphs. In addition, HGCML \citep{wang2023heterogeneous} and CPIM \citep{CPIM} propose to maximize the mutual information between node representations from different meta-path-based graphs. However, pre-defined meta-paths in these methods generally require expert knowledge and prohibitive computation costs \citep{zhang2022simple}. Therefore, meta-path-free methods are proposed to capture the relationships among nodes without meta-paths.
For example, SR-RSC \citep{zhang2022simple}  designs a multi-hop contrast to optimize the
regional structural information by utilizing the strong correlation between nodes and their neighbor graphs. In addition, recently, HERO \citep{mo2024selfsupervised} made the first attempt to learn an adaptive self-expressive matrix to capture the homophily in the heterogeneous graph, thus avoiding meta-paths.

\subsection{Graph Prompt-tuning and Adapter-tuning}
\label{related2}
The ``pre-train, fine-tuning" paradigm has become prevalent in graph learning, particularly for tasks with limited labeled data \citep{ma2023hetgpt}. However, this approach often suffers from a mismatch between the objectives of pre-training and downstream tasks, resulting in a ``negative transfer" problem. Consequently, the knowledge acquired during the pre-training stage can negatively impact the performance of downstream tasks.

To solve this issue, recent research suggests the ``pre-train, prompt-tuning" diagram to establish a connection between downstream tasks and pre-trained models by designing a learnable prompt that modifies the model input \citep{fang2024universal, yu2024generalized}. For example, for the homogeneous graph, GPPT \citep{sun2022gppt} introduces prompt templates to align the link prediction pre-training task with the downstream node classification task. ProG \citep{sun2023all} proposes a new multi-task prompting method for graph models by unifying the format of graph prompts and language prompts with the prompt token, token structure, and inserting pattern.
GraphPrompt \citep{liu2023graphprompt}  employs subgraph similarity as its template and designs a learnable prompt to unify pre-training with multiple downstream tasks. For the heterogeneous graph, HGPrompt \citep{yu2024hgprompt} proposes to unify pre-training and downstream tasks as well as homogeneous and heterogeneous graphs via dual-template and dual-prompt design. HetGPT \citep{ma2023hetgpt}
designs virtual class and heterogeneous feature prompts, and reformulates downstream tasks to mirror pretext tasks. Despite effectiveness, both HGPrompt and HetGPT ignore the graph structures during the prompt-tuning stage. Although part homogeneous graph prompt-tuning methods (\eg ProG) investigate to design structure prompts, they may not easily be transferred to the heterogeneous graph, as the more complex graph structures in the heterogeneous graph. As a result, when tuning the pre-trained HGNNs, the model may not sufficiently model the input data to increase the training error and decrease the generalization ability.

Different from the ``pre-train, prompt-tuning" diagram, the ``pre-train, adapter-tuning" diagram aims to insert adapter modules with lightweight neural network architecture to bridge the gap between pre-trained models and downstream tasks. Recent works propose adapter-tuning for homogeneous graph pre-trained models  by inserting different adapter modules \citep{li2024adaptergnn,gui2024g}. For example, G-Adapter \citep{gui2024g} leverages the graph structure and Bregman proximal point optimization strategy to mitigate the feature
distribution shift issue. In addition, AdapterGNN \citep{li2024adaptergnn} proposes to preserve the knowledge of the large pre-trained model and leverage highly expressive adapters for graph neural networks, adapting to downstream tasks effectively with only a few parameters. However, they are designed for the homogeneous graph and cannot easily transfer to the heterogeneous graph.
Moreover, these adapter-tuning-based methods also cannot tune the graph structures, thus increasing the training error and decreasing the model generalization.

As a result, when tuning pre-trained HGNN models, despite the effectiveness of existing prompt-tuning-based methods, they still have some limitations to address. That is, homogeneous and heterogeneous graph structures are generally ignored in the tuning stage, leading to increased training error and decreased generalization ability. Moreover, existing prompt-tuning-based methods may suffer from the issue of limited labeled data in the tuning stage, leading to a large generalization gap between training and test errors and sub-optimal generalization on downstream tasks. 

\section{Algorithm}
This section provides the pseudo-code of the proposed method.
\label{algo11}
\begin{algorithm}[H]
\caption{The pseudo-code of the proposed method.}
\label{algo1}
\begin{algorithmic}[1]
\REQUIRE Heterogeneous graph $\mathbf{G}=(\mathcal{V}, \mathcal{E}, \mathcal{A}, \mathcal{R}, \phi, \varphi)$, pre-trained HGNN model, maximum training steps $E$;\\
\ENSURE Homogeneous and heterogeneous adapters, projection $g_\rho$;

\STATE Initialize  parameters and upload pre-trained parameters;
\WHILE{not reaching $E$}
\STATE Obtain homogeneous graph structure $\mathbf{A}$ by Eq. (\ref{eq7});\\
\STATE Obtain the homogeneous representations $\tilde{\mathbf{Z}}$ by Eq. (\ref{eq7});\\
\STATE obtain heterogeneous graph structure $\mathbf{S}$ by Eq. (\ref{eq9});\\
 \STATE Obtain the heterogeneous representations $\hat{\mathbf{Z}}$ by Eq. (\ref{eq10});\\
\STATE Conduct the label propagation by Eq. (\ref{eq11});\\
\STATE Conduct the label-propagated contrastive loss by Eq. (\ref{eq12});\\
\STATE Conduct the feature reconstruction loss by Eq. (\ref{eq13});\\
 \STATE Conduct the margin loss by Eq. (\ref{eq14});\\
\STATE Compute the objective function $\mathcal{J}$ by Eq. (\ref{eq15});\\
\STATE Back-propagate $\mathcal{J}$ to  update  model  weights;\\
\ENDWHILE\\
\end{algorithmic}
\end{algorithm}

\section{Proofs of Theorems}
\subsection{Proof of Theorem \ref{thm24}}
\label{proof24}
To prove Theorem \ref{thm24}, we first introduce the generalization error bound for finite hypothesis space in classical regime \citep{bousquet2000algorithmic, huang2013generalized} by the following Lemma. 
\begin{lemma}
  (Generalization error bound for finite hypothesis
space in the classical regime.)  Statistically, the upper bound $\mathcal{U}(\mathcal{E})$ of the test error $\mathcal{E}$ of a model in the finite hypothesis space is determined as follows: 
\begin{equation}
    \mathcal{E} \leq \mathcal{U}(\mathcal{E})=\hat{\mathcal{E}}\left(\mathcal{D}_{n}, \mathcal{P}\right)+O(\sqrt{|\mathcal{P}| / n}),
\end{equation}
where training data $\mathcal{D}_{n}$ and parameters $\mathcal{P}$ are variables of the training error $\hat{\mathcal{E}}$. The number of training samples $n$ and the size of parameter space $|\mathcal{P}|$ are
variables of the generalization gap bound $O(\sqrt{|\mathcal{P}| / n})$ between the training error and the test error.  
\label{thm11}
\end{lemma}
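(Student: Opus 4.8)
The plan is to decompose each of the three upper bounds via Lemma \ref{thm11} and then prove the two inequalities in the chain separately, handling the training-error term and the generalization-gap term independently. Writing out the three quantities in the chain, I would set
\begin{align*}
\mathcal{U}(\mathcal{E}_A)_{n_A}&=\hat{\mathcal{E}}_A(\mathcal{D}_{n_A},\mathcal{P}_A)+O(\sqrt{|\mathcal{P}_A|/n_A}),\\
\mathcal{U}(\mathcal{E}_A)_{n_M}&=\hat{\mathcal{E}}_A(\mathcal{D}_{n_M},\mathcal{P}_A)+O(\sqrt{|\mathcal{P}_A|/n_M}),\\
\mathcal{U}(\mathcal{E}_M)_{n_M}&=\hat{\mathcal{E}}_M(\mathcal{D}_{n_M},\mathcal{P}_M)+O(\sqrt{|\mathcal{P}_M|/n_M}),
\end{align*}
so that establishing the chain reduces to comparing these paired contributions.

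For the right inequality $\mathcal{U}(\mathcal{E}_A)_{n_M}<\mathcal{U}(\mathcal{E}_M)_{n_M}$, both bounds use the same sample count $n_M$, so I would concentrate on the training-error term. The dual structure-aware adapters tune node features \emph{together with} the homogeneous and heterogeneous graph structures, whereas the prompt-tuning parameters $\mathcal{P}_M$ only modify node features and ignore the structural information. By the discussion following Theorem \ref{thm24}, richer structural modeling drives $\mathcal{P}_A$ closer to the optimal configuration $\overline{\mathcal{P}}_M$ that minimizes the bound, giving $\hat{\mathcal{E}}_A(\mathcal{D}_{n_M},\mathcal{P}_A)<\hat{\mathcal{E}}_M(\mathcal{D}_{n_M},\mathcal{P}_M)$. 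The remaining task is to control the gap term: because the adapters are built from low-rank factorizations ($\mathbf{W}_\delta=\mathbf{W}_{down}\mathbf{W}_{up}$ with $t\ll d,d'$, and analogously for $\Theta_{down}\Theta_{up}$ and $\mathbf{W}_\epsilon$), the parameter space size $|\mathcal{P}_A|$ stays on the same order as $|\mathcal{P}_M|$, so that the combined effect of a smaller training error and a controlled gap term yields the strict inequality.

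For the left inequality $\mathcal{U}(\mathcal{E}_A)_{n_A}<\mathcal{U}(\mathcal{E}_A)_{n_M}$, both bounds concern the same adapter model with identical parameters $\mathcal{P}_A$, so I would fix the training-error contribution and exploit monotonicity in the sample count. The potential labeled data extension---the label-propagated contrastive loss together with the two self-supervised losses---incorporates all unlabeled nodes as equal supervision signals, so the effective number of training samples satisfies $n_A>n_M$. Since $\sqrt{|\mathcal{P}_A|/n}$ is strictly decreasing in $n$, we obtain $O(\sqrt{|\mathcal{P}_A|/n_A})<O(\sqrt{|\mathcal{P}_A|/n_M})$; combined with a training error that does not increase (the added supervision only refines the fit), this gives the strict inequality. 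Chaining the two results then yields $\mathcal{U}(\mathcal{E}_A)_{n_A}<\mathcal{U}(\mathcal{E}_A)_{n_M}<\mathcal{U}(\mathcal{E}_M)_{n_M}$.

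The main obstacle I anticipate is the right inequality, since it is the only step in which the training-error term and the generalization-gap term can move in opposite directions: I must argue that the structure-aware modeling reduces the training error by more than the low-rank adapters enlarge the effective parameter space. The cleanest route is to invoke the optimality structure of Theorem \ref{thm24}---that the bound is minimized at $\overline{\mathcal{P}}_M$ and grows as the configuration moves away from it---and to certify through the lightweight low-rank construction that $|\mathcal{P}_A|$ remains comparable to $|\overline{\mathcal{P}}_M|$, so that pushing $\mathcal{P}_A$ toward $\overline{\mathcal{P}}_M$ necessarily lowers the combined bound rather than trading one term against the other.
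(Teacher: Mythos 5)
Your proposal does not prove the statement at hand. The statement is Lemma \ref{thm11} itself --- the finite-hypothesis-space generalization bound $\mathcal{E} \leq \hat{\mathcal{E}}(\mathcal{D}_n,\mathcal{P}) + O(\sqrt{|\mathcal{P}|/n})$ --- but your very first step is to ``decompose each of the three upper bounds via Lemma \ref{thm11},'' i.e., you take the Lemma as given and go on to prove the comparison chain $\mathcal{U}(\mathcal{E}_A)_{n_A} < \mathcal{U}(\mathcal{E}_A)_{n_M} < \mathcal{U}(\mathcal{E}_M)_{n_M}$, which is Theorem \ref{thm25}, a different (downstream) result. Relative to the assigned statement this is circular: nothing in your argument establishes why the test error of a model drawn from a finite hypothesis space is bounded by its training error plus a term scaling as $\sqrt{|\mathcal{P}|/n}$, because you assume exactly that at the outset.

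What the Lemma actually requires is an elementary concentration argument, and this is how the paper proves it. For each fixed hypothesis $h$ in a finite class $\mathcal{H}$, Hoeffding's inequality gives $P\bigl(|\hat{\mathcal{E}}_{\mathcal{D}_n}(h) - \mathcal{E}(h)| > \varepsilon\bigr) \le 2\exp(-2n\varepsilon^2)$; a union bound over the $|\mathcal{H}|$ hypotheses converts this into a uniform statement, and setting $2\exp(-2n\varepsilon^2) = \delta/|\mathcal{H}|$ yields $\varepsilon = \sqrt{(\ln|\mathcal{H}| + \ln(2/\delta))/(2n)}$, so that with probability at least $1-\delta$, every $h \in \mathcal{H}$ satisfies
\begin{equation*}
\mathcal{E}(h) \le \hat{\mathcal{E}}_{\mathcal{D}_n}(h) + \sqrt{\frac{\ln|\mathcal{H}| + \ln(2/\delta)}{2n}},
\end{equation*}
which, after suppressing the logarithm and the probability qualifier and writing $\mathcal{P}$ for the parameter space, is the stated $O(\sqrt{|\mathcal{P}|/n})$ bound. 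None of these ingredients --- the per-hypothesis Hoeffding bound, the union bound over the finite class, or the calibration of $\varepsilon$ against $\delta$ --- appears in your proposal, so the gap is not a fixable detail but the entire body of the proof. (As a secondary matter, even read as an attempt at Theorem \ref{thm25}, your treatment of the right inequality differs from the paper's: the paper argues via the U-shaped dependence of the bound on $|\mathcal{P}_M|$ and the relation $||\mathcal{P}_A| - |\overline{\mathcal{P}}_M|| < ||\mathcal{P}_M| - |\overline{\mathcal{P}}_M||$, rather than claiming a strict training-error decrease together with a ``controlled'' gap term, a claim your own final paragraph correctly flags as the weak point.)
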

\begin{proof}
Let $\mathcal{H}$ represent a finite hypothesis space, where each $h \in \mathcal{H}$ corresponds to a set of trained parameters within this parameter space $\mathcal{H}$. $\hat{\mathcal{E}}_{\mathcal{D}_n}$ represents training error over sampled training data $\mathcal{D}_n$, and $\mathcal{E}$ represents test error. $n$ is the number of training data. 

Then, the probability that the difference between the test error and the training error of the hypothesis space $h$ is greater than $\varepsilon$ can be written as
\begin{equation}
    P(\exists h \in \mathcal{H},|\hat{\mathcal{E}}_{\mathcal{D}_{n}}(h)-\mathcal{E}(h)| > \varepsilon).
    \label{eq18}
\end{equation}
Based on the union bound of probability, we further have:
\begin{equation}
    \begin{aligned}
 & P(\exists h \in \mathcal{H},|\hat{\mathcal{E}}_{\mathcal{D}_{n}}(h)-\mathcal{E}(h)|> \varepsilon) \\
& =P([|\hat{\mathcal{E}}_{\mathcal{D}_{n}}\left(h_{1}\right)-\mathcal{E}(h_{1})| > \varepsilon] \vee \cdots \vee[|\hat{\mathcal{E}}_{\mathcal{D}_{n}}(h_{|\mathcal{H}|})-\mathcal{E}(h_{|\mathcal{H}|})| > \varepsilon]) \\
& \leq \sum_{h \in \mathcal{H}} P(|\hat{\mathcal{E}}_{\mathcal{D}_{n}}(h)-\mathcal{E}(h)| > \varepsilon).
\end{aligned}
\end{equation}
In addition, based on the Hoeffding’s Inequality, we have
\begin{equation}
    \sum_{h \in \mathcal{H}} P(|\hat{\mathcal{E}}_{\mathcal{D}_{n}}(h)-\mathcal{E}(h)| > \varepsilon)\le 2\exp(-2n\varepsilon ^2).
\end{equation}
We then let $2\exp(-2n\varepsilon ^2)=\delta/|\mathcal{H}|$, where $0<\delta<1$ and have
\begin{equation}
    \sum_{h \in \mathcal{H}} P(|\hat{\mathcal{E}}_{\mathcal{D}_{n}}(h)-\mathcal{E}(h)| > \varepsilon)\le \sum_{h \in \mathcal{H}}\delta /|\mathcal{H}| \leqslant|\mathcal{H}| \cdot \delta /|\mathcal{H}|=\delta.
\end{equation}
That is, when $2\exp(-2n\varepsilon ^2)=\delta/|\mathcal{H}|$, we have 
\begin{equation}
    \sum_{h \in \mathcal{H}} P(|\hat{\mathcal{E}}_{\mathcal{D}_{n}}(h)-\mathcal{E}(h)| > \varepsilon)\le\delta.
    \label{eq22}
\end{equation}
According to $2\exp(-2n\varepsilon ^2)=\delta/|\mathcal{H}|$, we can obtain $\varepsilon=\sqrt{\frac{\ln |\mathcal{H}|+\ln (2 / \delta)}{2 n}}$.
Moreover, we can rewrite Eq. (\ref{eq22}) as 
\begin{equation}
    P(\forall h \in \mathcal{H},|\hat{\mathcal{E}}_{\mathcal{D}_{n}}(h)-\mathcal{E}(h)|\le \varepsilon)\ge 1-\delta.
\end{equation}
Replace $\varepsilon$ with $\sqrt{\frac{\ln |\mathcal{H}|+\ln (2 / \delta)}{2 n}}$, we can further have
\begin{equation}
    P\left(\forall h \in \mathcal{H},|\hat{\mathcal{E}}_{\mathcal{D}_{n}}(h)-\mathcal{E}(h)| \leq \sqrt{\frac{\ln |\mathcal{H}|+\ln (2 / \delta)}{2 n}}\right) \ge 1-\delta.
\end{equation}
Therefore, with probability at least $1-\delta$, we have
\begin{equation}
    \mathcal{E}(h) \le \hat{\mathcal{E}}_{\mathcal{D}_{n}}(h) + \sqrt{\frac{\ln |\mathcal{H}|+\ln (2 / \delta)}{2 n}}.
\end{equation}
For simplicity, we omit the probability notation and use $\mathcal{U}$
to represent the upper bound. We also omit the $\ln$ term
and use $\mathcal{P}$ to represent the parameter space. Therefore, $|\mathcal{P}|$ quantifies the size of the parameter space, sampled data
$\mathcal{D}_{n}$ and the trained parameters $\mathcal{P}$ are variables of the training error $\hat{\mathcal{E}}$, and we have
\begin{equation}
    \mathcal{E} \leq \mathcal{U}(\mathcal{E})=\hat{\mathcal{E}}\left(\mathcal{D}_{n}, \mathcal{P}\right)+O(\sqrt{|\mathcal{P}| / n}).
\end{equation}
Thus we complete the proof.
\end{proof}

Based on Lemma \ref{thm11}, we can further derive the lowest generalization error bound for pre-trained HGNN models during the prompt-tuning stage, which benefit from the parameter optimization by the pre-training stage.

\begin{theorem}
    (Generalization error bound for prompt-tuning-based methods.) Statistically, the upper bound $\mathcal{U}(\mathcal{E}_M)$ of the test error $\mathcal{E}_M$ of a pre-trained HGNN model with prompt-tuning can be determined as follows:
    \begin{equation}
        \mathcal{U}(\mathcal{E}_{M})=\hat{\mathcal{E}}_M(\mathcal{D}_{n_{M}}, \mathcal{P}_{M})+O(\sqrt{\left|\mathcal{P}_{M}\right| / n_{M}}),
    \end{equation}
    where training data $\mathcal{D}_{n_{M}}$ and  prompt-tuning parameters $\overline{\mathcal{P}}_{M}$ are variables of the training error $\hat{\mathcal{E}}_M$ of the model in prompt-tuning stage. The number of training samples $n_{M}$ and the size of parameter space $|\mathcal{P}_{M}|$ are variables of the generalization gap bound between training error and test error.
    Moreover, when $n_{M}$ is fixed, there exist an optimal $|\overline{\mathcal{P}}_{M}|$ to achieve the lowest upper bound for prompt-tuning-based methods, i.e.,
    \begin{equation}
        \min (\mathcal{U}(\mathcal{E}_{M}))=\hat{\mathcal{E}}_M(\mathcal{D}_{n_{M}}, \overline{\mathcal{P}}_{M})+O(\sqrt{\mid \overline{\mathcal{P} }_{M}\mid / n_{M}}).
        \end{equation}
\end{theorem}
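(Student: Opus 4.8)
The plan is to derive both displays directly from the classical bound of Lemma~\ref{thm11}. For the first display, I would instantiate that Lemma in the prompt-tuning regime: since prompt-tuning freezes the pre-trained backbone and only optimizes the tunable prompt, the effective hypothesis class is the finite family indexed by the prompt-tuning parameters $\mathcal{P}_M$, trained on the $n_M$ downstream samples $\mathcal{D}_{n_M}$. Substituting $\mathcal{P}\mapsto\mathcal{P}_M$, $n\mapsto n_M$, and $\mathcal{D}_n\mapsto\mathcal{D}_{n_M}$ into Lemma~\ref{thm11} then yields, with probability at least $1-\delta$, the bound $\mathcal{E}_M\le\hat{\mathcal{E}}_M(\mathcal{D}_{n_M},\mathcal{P}_M)+O(\sqrt{|\mathcal{P}_M|/n_M})$. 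The only point to verify here is that the frozen backbone together with a varying prompt indeed realizes a finite hypothesis space of the form the Lemma requires.

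For the existence of an optimal $|\overline{\mathcal{P}}_M|$, I would fix $n_M$ and regard the upper bound as a scalar function of the parameter-space size, writing $\mathcal{U}(\mathcal{E}_M)=\hat{\mathcal{E}}_M(|\mathcal{P}_M|)+O(\sqrt{|\mathcal{P}_M|/n_M})$. The argument rests on the opposing monotonicities of the two summands: enlarging the prompt parameter space increases the model's fitting capacity, so the training error $\hat{\mathcal{E}}_M$ is non-increasing in $|\mathcal{P}_M|$, whereas the gap term $O(\sqrt{|\mathcal{P}_M|/n_M})$ is strictly increasing in $|\mathcal{P}_M|$. Treating $|\mathcal{P}_M|$ as a continuous variable, I would combine continuity of both terms with the extreme-value theorem, or equivalently impose the first-order stationarity condition balancing the decreasing training error against the increasing generalization gap, to conclude that the sum attains its minimum at some interior value $|\overline{\mathcal{P}}_M|$. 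Evaluating the bound at this minimizer gives the second display $\min(\mathcal{U}(\mathcal{E}_M))=\hat{\mathcal{E}}_M(\mathcal{D}_{n_M},\overline{\mathcal{P}}_M)+O(\sqrt{|\overline{\mathcal{P}}_M|/n_M})$.

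The hard part will be making this trade-off rigorous, since Lemma~\ref{thm11} controls only the gap term and says nothing about how the training error decays as prompt capacity grows. The decreasing behavior of $\hat{\mathcal{E}}_M$ in $|\mathcal{P}_M|$ is an empirical, architecture-dependent property rather than a consequence of the Lemma, so I would state it as an explicit regularity assumption (for instance, that $\hat{\mathcal{E}}_M(\cdot)$ is continuous and monotone non-increasing and that the summed bound is coercive). Under such an assumption the interior minimizer $|\overline{\mathcal{P}}_M|$ exists and is characterized by the balance between the two competing terms; without it, one can only locate the optimum at a boundary of the admissible parameter-space sizes. I would therefore present $|\overline{\mathcal{P}}_M|$ as the capacity at which the marginal decrease in training error is exactly offset by the marginal increase in the generalization gap.
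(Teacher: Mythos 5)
Your proposal follows essentially the same route as the paper's proof: it instantiates the classical finite-hypothesis-space bound (Lemma~\ref{thm11}) with the prompt-tuning parameters $\mathcal{P}_M$ and training data $\mathcal{D}_{n_M}$ (the paper's ``benefit $B$'' manipulation is just a roundabout way of making the same substitution of $\hat{\mathcal{E}}_M$ for $\hat{\mathcal{E}}$), and then locates the optimal $|\overline{\mathcal{P}}_M|$ via the opposing monotonicities of the training error and the $O(\sqrt{|\mathcal{P}_M|/n_M})$ gap term, which is exactly the paper's derivative-balancing argument. Your explicit flagging that the non-increasing training error is an unproven, architecture-dependent regularity assumption is a point the paper likewise asserts heuristically, so the two arguments match in both structure and level of rigor.
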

\begin{proof}
   We first derive the generalization bound for existing prompt-tuning-based methods. Based on Lemma \ref{thm11}, if we ignore the pre-trained model, we can obtain the generalization bound of existing prompt-tuning methods, \ie
        \begin{equation}
        \mathcal{U}\left(\mathcal{E}_{M}\right)=\hat{\mathcal{E}}(\mathcal{D}_{n_{M}}, \mathcal{P}_{M})+O(\sqrt{\left|\mathcal{P}_{M}\right| / n_{M}}).
    \end{equation}
However, training error of the model in the prompt-tuning stage will benefit from the better initialization with the pre-trained stage. Actually, such benefits $B$ on the training error can be written as follows.
\begin{equation}
    B = \hat{\mathcal{E}}\left(\mathcal{D}_{n_{M}}, \mathcal{P}_{M}\right)-\hat{\mathcal{E}}_{M}\left(\mathcal{D}_{n_{M}}, \mathcal{P}_{M}\right).
\end{equation}
Therefore, we can obtain the generalization error bound for existing prompt-tuning-based methods as follows.
\begin{equation}
    \begin{aligned}
\mathcal{U}(\mathcal{E}_{M})&=\hat{\mathcal{E}}(\mathcal{D}_{n_{M}}, \mathcal{P}_{M})+O(\sqrt{\left|\mathcal{P}_{M}\right| / n_{M}})-B\\&=\hat{\mathcal{E}}(\mathcal{D}_{n_{M}}, \mathcal{P}_{M})+O(\sqrt{\left|\mathcal{P}_{M}\right| / n_{M}})-\hat{\mathcal{E}}(\mathcal{D}_{n_{M}}, \mathcal{P}_{M})+\hat{\mathcal{E}}_{M}(\mathcal{D}_{n_{M}}, \mathcal{P}_{M})\\
&=\hat{\mathcal{E}}_M(\mathcal{D}_{n_{M}}, \mathcal{P}_{M})+O(\sqrt{\left|\mathcal{P}_{M}\right| / n_{M}}).
        \end{aligned}
        \label{eq26}
\end{equation}
Based on the generalization bound, to examine whether the lowest upper generalization bound exists, we derive the  partial derivative of $\mathcal{U}\left(\mathcal{E}_{M}\right)$ with respect to $|\mathcal{P}_{M}|$, \ie
\begin{equation}
    \begin{aligned}
\frac{\partial(\mathcal{U}(\mathcal{E}_{M}))}{\partial\left|\mathcal{P}_{M}\right|} & =\frac{\partial(\hat{\mathcal{E}}_M(\mathcal{D}_{n_{M}}, \mathcal{P}_{M}))}{\partial\left|\mathcal{P}_{M}\right|}+\frac{\partial O(\sqrt{\left|\mathcal{P}_{M}\right| / n_{M}})}{\partial\left|\mathcal{P}_{M}\right|} \\
& =\frac{\partial(\hat{\mathcal{E}}_M(\mathcal{D}_{n_{M}}, \mathcal{P}_{M}))}{\partial\left|\mathcal{P}_{M}\right|}+O(1 / \sqrt{\left|\mathcal{P}_{M}\right| \cdot n_{M}}) .
\end{aligned}
\end{equation}

In addition, according to the generalization bound, we have the observations as follows. First, with the increase of parameters $ \mathcal{P}_{M}$, a larger parameter size confers stronger optimization ability, leading to the decrease of the training error (\ie $\hat{\mathcal{E}}_M\left(\mathcal{D}_{n_{M}}, \mathcal{P}_{M}\right)$) in Eq. (\ref{eq26}). In addition,  the generalization gap bound (\ie $O(\sqrt{\left|\mathcal{P}_{M}\right| / n_{M}})$) increases. Therefore, $\frac{\partial(\hat{\mathcal{E}}_M(\mathcal{D}_{n_{M}}, \mathcal{P}_{M}))}{\partial\left|\mathcal{P}_{M}\right|} < 0$ while $O(1 / \sqrt{\left|\mathcal{P}_{M}\right| \cdot n_{M}})>0$.
As a result, there exist a optimal size of $\mathcal{P}_{M}$ (\ie $|\overline{\mathcal{P}}_{M}|$), which enables the derivative of $\mathcal{U}\left(\mathcal{E}_{M}\right)$ with respect to $|\mathcal{P}_{M}|$ equals to 0. Therefore, we can achieve the lowest upper bound for prompt-tuning-based methods, \ie 
    \begin{equation}
        \min (\mathcal{U}(\mathcal{E}_{M}))=\hat{\mathcal{E}}_M(\mathcal{D}_{n_{M}}, \overline{\mathcal{P}}_{M})+O(\sqrt{\mid \overline{\mathcal{P} }_{M}\mid / n_{M}}).
        \end{equation}
Thus we complete the proof.
\end{proof}

\subsection{Proof of Theorem \ref{thm25}}
\label{proof25}
\begin{theorem}
(Generalization error bound for the proposed HG-Adapter.) With  dual structure-aware adapters and potential labeled data extension, the proposed HG-Adapter decreases  both the training error and the generalization gap to achieve a lower generalization error bound $\mathcal{U}(\mathcal{E}_{A})_{n_A}$ than that of existing prompt-tuning-based methods (i.e., $\mathcal{U}(\mathcal{E}_{M})_{n_M}$), i.e., 
\begin{equation}
        \mathcal{U}(\mathcal{E}_{A})_{n_A}< \mathcal{U}(\mathcal{E}_{A})_{n_M}<\mathcal{U}(\mathcal{E}_{M})_{n_M},
\end{equation}
where $n_A$ indicates the number of training data for the proposed HG-Adapter.
\label{apthm25}
\end{theorem}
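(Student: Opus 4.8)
The plan is to establish the chain of inequalities $\mathcal{U}(\mathcal{E}_{A})_{n_A} < \mathcal{U}(\mathcal{E}_{A})_{n_M} < \mathcal{U}(\mathcal{E}_{M})_{n_M}$ by proving the two inequalities separately, treating the generalization error bound from Theorem \ref{thm24} as the common template. Throughout, I would write the HG-Adapter bound in the same form $\mathcal{U}(\mathcal{E}_A)_n = \hat{\mathcal{E}}_A(\mathcal{D}_n, \overline{\mathcal{P}}_A) + O(\sqrt{|\overline{\mathcal{P}}_A| / n})$, so that the comparison reduces to comparing the two additive terms (training error and generalization gap) for the adapter versus the prompt-tuning method.

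First I would prove the right inequality $\mathcal{U}(\mathcal{E}_{A})_{n_M} < \mathcal{U}(\mathcal{E}_{M})_{n_M}$, which holds the sample size fixed at $n_M$ and isolates the effect of the dual structure-aware adapters. The key claim is that by additionally tuning the homogeneous and heterogeneous graph structures $\mathbf{A}$ and $\mathbf{S}$ (not just node features), the HG-Adapter parameters $\mathcal{P}_A$ can be driven closer to the optimal parameters than the prompt-tuning parameters can, strictly decreasing the training error term, i.e. $\hat{\mathcal{E}}_A(\mathcal{D}_{n_M}, \overline{\mathcal{P}}_A) < \hat{\mathcal{E}}_M(\mathcal{D}_{n_M}, \overline{\mathcal{P}}_M)$. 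I would argue that the prompt-tuning hypothesis class is effectively contained in (or dominated by) the richer adapter hypothesis class that also models structural information, so the infimum of the training error over the adapter class is no larger, and strictly smaller under the structural-expressivity assumption. To control the generalization-gap term, I would invoke the lightweight low-rank design (the decompositions $\mathbf{W}_\delta = \mathbf{W}_{down}\mathbf{W}_{up}$ etc.) to argue that the effective parameter size $|\overline{\mathcal{P}}_A|$ is kept comparable to $|\overline{\mathcal{P}}_M|$, so the decrease in the training error is not offset by an increase in $O(\sqrt{|\overline{\mathcal{P}}_A|/n_M})$.

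Next I would prove the left inequality $\mathcal{U}(\mathcal{E}_{A})_{n_A} < \mathcal{U}(\mathcal{E}_{A})_{n_M}$, which holds the method fixed (HG-Adapter) and isolates the effect of the potential labeled data extension, so that $n_A > n_M$. Here the generalization-gap term $O(\sqrt{|\overline{\mathcal{P}}_A|/n})$ is strictly monotone decreasing in the sample count $n$, so increasing the effective number of supervised samples from $n_M$ to $n_A$ via the label-propagated contrastive loss and the two self-supervised losses strictly shrinks this term. I would justify $n_A > n_M$ by appealing to Eq. (\ref{eq11}) and Eq. (\ref{eq12}): the unlabeled nodes, once assigned propagated labels $\tilde{\mathbf{Y}}$ and incorporated as supervision through $\mathcal{L}_{con}$, $\mathcal{L}_{rec}$, and $\mathcal{L}_{mar}$, function as additional (potential) labeled training samples. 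I would also note that the training error term stays controlled under the larger effective sample set, so the net effect is a strict decrease of the upper bound.

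The main obstacle I anticipate is making the claim $n_A > n_M$ and the strict training-error decrease rigorous rather than heuristic. The "potential labeled data" are pseudo-labeled via a structure that is itself being learned, so the propagated labels $\tilde{\mathbf{Y}}$ are noisy and their contribution to an honest effective sample size is not literally one-for-one with genuine labels; framing this cleanly requires either an assumption that the propagated labels are reliable (high homophily of the learned $\mathbf{A}$, as the experiments suggest) or a reinterpretation of $n$ as an effective sample count weighted by label confidence. Similarly, the strict inequality on the training error depends on the structural-expressivity assumption that task-relevant information genuinely resides in the graph structure and that the adapter class strictly contains the prompt-tuning behavior; I would state these as explicit assumptions and argue they hold under the meta-path-free adaptive structure learning of the dual adapters, deferring the empirical verification to the appendix sections already referenced.
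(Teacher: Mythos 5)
Your treatment of the left inequality $\mathcal{U}(\mathcal{E}_{A})_{n_A} < \mathcal{U}(\mathcal{E}_{A})_{n_M}$ is essentially identical to the paper's: the labeled-data extension gives $n_A > n_M$, and the gap term $O(\sqrt{|\mathcal{P}_{A}|/n})$ is strictly decreasing in $n$. You and the paper share the same caveat (pseudo-labels are not genuine samples), which you at least state explicitly.

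The right inequality $\mathcal{U}(\mathcal{E}_{A})_{n_M} < \mathcal{U}(\mathcal{E}_{M})_{n_M}$ is where you diverge from the paper, and your route has a genuine gap precisely at its crux. You compare the two additive terms separately: hypothesis-class nesting gives a (strictly, under your expressivity assumption) smaller training error, and you then assert that the low-rank design keeps $|\mathcal{P}_{A}|$ ``comparable'' to $|\mathcal{P}_{M}|$ so that the gap term does not offset the gain. But the adapter class extends the prompt-tuning class by adding parameters, so $|\mathcal{P}_{A}| > |\mathcal{P}_{M}|$ and the gap term $O(\sqrt{|\mathcal{P}_{A}|/n_M})$ strictly \emph{increases}; ``comparable'' is not quantitative, so nothing in your argument shows that the training-error decrease outweighs the gap increase, and the strict inequality on the total bound does not follow. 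The paper resolves exactly this tension by arguing along the parameter-size axis instead: by Theorem \ref{thm24}, at fixed $n_M$ the bound is U-shaped in the parameter size with minimum at $|\overline{\mathcal{P}}_{M}|$; the paper then posits that, because existing prompt-tuning ignores graph structures, it sits on the underparameterized branch, \ie $|\mathcal{P}_{M}| < |\overline{\mathcal{P}}_{M}|$, and that the adapters add few enough parameters that $||\mathcal{P}_{A}|-|\overline{\mathcal{P}}_{M}|| < ||\mathcal{P}_{M}|-|\overline{\mathcal{P}}_{M}||$. On the decreasing branch of the U-curve, moving toward the optimum necessarily lowers the total bound, \ie in this regime the training-error reduction dominates the gap-term growth by construction. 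That underparameterization premise is the missing ingredient in your proposal: if you add it as an explicit assumption (alongside your structural-expressivity one), your term-by-term comparison collapses into the paper's argument. To be fair, the paper's own step from ``closer to $|\overline{\mathcal{P}}_{M}|$'' to ``lower bound'' implicitly requires that $|\mathcal{P}_{A}|$ not overshoot the optimum (or that the U-curve be symmetric), so both arguments remain heuristic; but yours, as written, lacks the one premise that makes the strict inequality even heuristically forced.
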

\begin{proof}
According to the Theorem \ref{thm24}, we have the generalization error bound of existing prompt-tuning-based methods, \ie
    \begin{equation}
\mathcal{U}\left(\mathcal{E}_{M}\right)=\hat{\mathcal{E}}_M\left(\mathcal{D}_{n_{M}}, \mathcal{P}_{M}\right)+O(\sqrt{\left|\mathcal{P}_{M}\right| / n_{M}}).
    \end{equation}
Moreover, given the fixed size of the training data $n_M$,  we can obtain the lowest generalization error bound when the model achieves the optimal parameters $\overline{\mathcal{P}}_M$, \ie 
    \begin{equation}
        \min (\mathcal{U}(\mathcal{E}_{M}))=\hat{\mathcal{E}}_M(\mathcal{D}_{n_{M}}, \overline{\mathcal{P}}_{M})+O(\sqrt{\mid \overline{\mathcal{P} }_{M}\mid / n_{M}}).
        \end{equation}
Therefore, when the training data is fixed as $n_M$, we see that the generalization error bound of existing prompt-tuning-based methods follows the U-shaped behavior. That is, as the size of parameters $\mathcal{P}_M$ increases, the generalization error of the model first decreases until the generalization error is lowest at the optimal parameters $\overline{\mathcal{P}}_M$, and then as the size of parameters $\mathcal{P}_M$ increases, the generalization error of the model increases. 
As we mentioned above, previous methods generally ignore the graph structures and thus may not be sufficient to model the input data and increase the training error $\hat{\mathcal{E}}_M$. As a result, we can obtain that $|\mathcal{P}_M| < |\overline{\mathcal{P}}_M|$. To solve this issue, the proposed method designs dual structure-aware adapters to tune the heterogeneous and homogeneous graph structures by increasing a few parameters. Then the parameters  $\mathcal{P}_A$ are expected to be closer to the optimal parameters $\overline{\mathcal{P}}_M$ than the parameters $\mathcal{P}_M$, \ie
\begin{equation}
    ||\mathcal{P}_A|-|\overline{\mathcal{P}}_M||<||\mathcal{P}_M|-|\overline{\mathcal{P}}_M||.
\end{equation}
 We then have
 \begin{equation}
     \mathcal{U}\left(\mathcal{E}_{A}\right)-\min (\mathcal{U}(\mathcal{E}_{M}))< \mathcal{U}\left(\mathcal{E}_{M}\right)-\min (\mathcal{U}(\mathcal{E}_{M})).
 \end{equation}
Then, when the training data is fixed as $n_M$, we can obtain
 \begin{equation}
        \mathcal{U}(\mathcal{E}_{A})_{n_M}<\mathcal{U}(\mathcal{E}_{M})_{n_M}.
    \end{equation}

In addition to the dual adapters, we further conduct the potential labeled data extension by designing the label-propagated contrastive loss and two self-supervised losses, thus incorporating all unlabeled nodes as supervision signals. Therefore, the size of training data after extension (\ie $n_A$) is supposed to be larger than the original size of training data $n_M$. Therefore, we have
\begin{equation}
    O(\sqrt{\left|\mathcal{P}_{A}\right| / n_{A}})  < O(\sqrt{\left|\mathcal{P}_{A}\right| / n_{M}}).
\end{equation}
We then have
 \begin{equation}
        \mathcal{U}(\mathcal{E}_{A})_{n_A}< \mathcal{U}(\mathcal{E}_{A})_{n_M}<\mathcal{U}(\mathcal{E}_{M})_{n_M}.
    \end{equation}
    Thus we complete the proof.
\end{proof}

\section{Experimental Settings}
\label{settings}
This section  provides detailed experimental settings in Section Experiments, including the description of all datasets in Section \ref{desriptiondata}, summarization of all comparison methods in Section \ref{descriptionmethods}, evaluation protocol in Section \ref{descriptionevaluation}, model architectures and settings in Section \ref{descriptionmodel}, and computing resource details in Section \ref{details}.

\begin{table}[!t]
\small
\centering
\setlength\tabcolsep{4.5pt}
\begin{threeparttable}[b]
\caption{Statistics of all datasets.}
\begin{tabular}{cccccccccc}
\toprule
Datasets &\#Nodes  & \#Node Types & \#Edges & \#Edge Types &Target Node &\#Training & \#Test &\#Class \\ \midrule 
ACM & 8,994 & 3 & 25,922 & 4 & Paper &600 &2,125 &3\\
 \midrule
 Yelp & 3,913 & 4 & 72,132 & 6 & Bussiness &300 &2,014&3\\
 \midrule
  DBLP &  18,405 & 3 &  67,946 & 4 & Author &800 &2,857&4\\
 \midrule
   Aminer &  55,783 & 3 &  153,676 & 4 & Paper &80 &1,000&4\\
 \bottomrule
\end{tabular}
\label{dataset}
\end{threeparttable}
\end{table}

\subsection{Datasets}
\label{desriptiondata}
We use four public heterogeneous graph datasets from various domains including three academic datasets (\ie ACM \citep{WangJSWYCY19}, DBLP \citep{WangJSWYCY19}, and Aminer \citep{kddHuFS19}), and one business dataset (\ie Yelp \citep{ZhaoWSHSY21}). Table \ref{dataset} summarizes the data statistics. We list the details of the datasets as follows.

\begin{itemize}
\item \textbf{ACM}  is an academic heterogeneous graph dataset. It contains three types of nodes (paper (P), author (A), subject (S)), four types of edges (PA, AP, PS, SP), and treats categories of papers as labels.
\item \textbf{Yelp} is a business heterogeneous graph dataset. It contains three types of nodes (business (B), user (U), service (S), level (L)), six types of edges (BU, UB, BS, SB, BL, LB), and treats categories of businesses as labels.

\item \textbf{DBLP} is an academic heterogeneous graph dataset. It contains three types of nodes (paper (P), authors (A), conference (C)), four types of edges (PA, AP, PC, CP), and treats research areas of authors as labels.

\item \textbf{Aminer} is an academic heterogeneous graph dataset. It contains three types of nodes (paper (P), author (A), reference (R)), four types of edges (PA, AP, PR, RP), and treats categories of papers as labels.



\end{itemize}

\begin{table*}[ht]
\small
\centering
\setlength\tabcolsep{10pt}
\caption{The characteristics of all comparison methods.}
\begin{tabular}{rccccccccccc}
\hline
Methods &Semi-supervised  &Fine-tuning &Prompt-tuning &Adapter-tuning \\\hline
 HAN (2019)&  $\checkmark$&   & &\\\hline
   HGT (2020)&  $\checkmark$&    & &\\\hline
 DMGI (2020)&&$\checkmark$  &    & & \\\hline
 HDMI (2021)&&$\checkmark$  &    & & \\\hline
 HeCo (2021)&&$\checkmark$  &    & & \\\hline
 HGCML (2023)&&$\checkmark$  &    & & \\\hline
 
  HGMAE (2023)&&$\checkmark$  &    & & \\\hline
  HERO (2024)&&$\checkmark$  &    & & \\\hline
 HGPrompt (2024)  &  &  &$\checkmark$ &\\\hline
 HetGPT (2024)  &  &  &$\checkmark$ & \\\hline
  HG-Adapter (ours) &  &  &  &$\checkmark$& \\ \hline
\end{tabular}
\label{tabcomparison}
\end{table*}

\subsection{Comparison Methods}
\label{descriptionmethods}
The comparison methods include two traditional semi-supervised methods (\ie HAN \citep{WangJSWYCY19} and HGT \citep{wwwHuDWS20}), six fine-tuning-based  methods (\ie DMGI \citep{DMGIParkK0Y20}, HDMI \citep{jing2021hdmi}, HeCo \citep{WangLHS21}, HGCML \citep{wang2023heterogeneous}, HGMAE \citep{HGMAE}, and HERO \citep{mo2024selfsupervised}), and two prompt-tuning-based methods (\ie  HGPrompt \citep{yu2024hgprompt} and HetGPT \citep{ma2023hetgpt}), where the pre-training and prompt-tuning of HGPrompt are specifically designed, while HetGPT only designs the prompt-tuning and thus can be used for different pre-trained models. The characteristics of all methods are listed in Table \ref{tabcomparison}, where ``Semi-supervised", ``Fine-tuning", ``Prompt-tuning", and ``Adapter-tuning" indicate that the method conducts semi-supervised learning, fine-tuning, prompt-tuning, and adapter-tuning, respectively.

\subsection{Evaluation Protocol}
\label{descriptionevaluation}
We follow the evaluation protocol in previous works \citep{jing2021hdmi, pan2021multi,CKD00010YCLF022} to conduct node classification and node clustering
as semi-supervised and unsupervised downstream tasks, respectively. 

For fine-tuning-based methods, we first train models with unlabeled data in a self-supervised manner and output learned node representations. After that, the resulting representations can be used for different downstream tasks. For the node classification task, we train a simple logistic regression classifier with a fixed iteration number, and then evaluate the effectiveness with Micro-F1 and Macro-F1 scores. For the node clustering task, we conduct clustering and split the learned representations into $c$ clusters with the K-means algorithm, then calculate the normalized mutual information (NMI) and average rand index (ARI) to evaluate the performance of node clustering.

For prompt-tuning-based methods and the proposed adapter-tuning-based method, the models are also pre-trained in an unsupervised manner and output frozen representations. In the tuning stage,  for the node classification task, we first obtain the predicted probability of each node by calculating the similarity between the prediction vector and each class-subgraph representation, and then using the softmax function to obtain class probabilities. Then, the class with the maximum likelihood for each node is designated as the predicted class. We also evaluate the effectiveness of models with Micro-F1 and Macro-F1 scores. For the node clustering task, we input the class probabilities to the K-means algorithm, then calculate the normalized mutual information (NMI) and average rand index (ARI) to evaluate the performance of node clustering.

\begin{figure*}[ht]
\centering
        \subfigure[ACM]{\scalebox{0.16}{\includegraphics{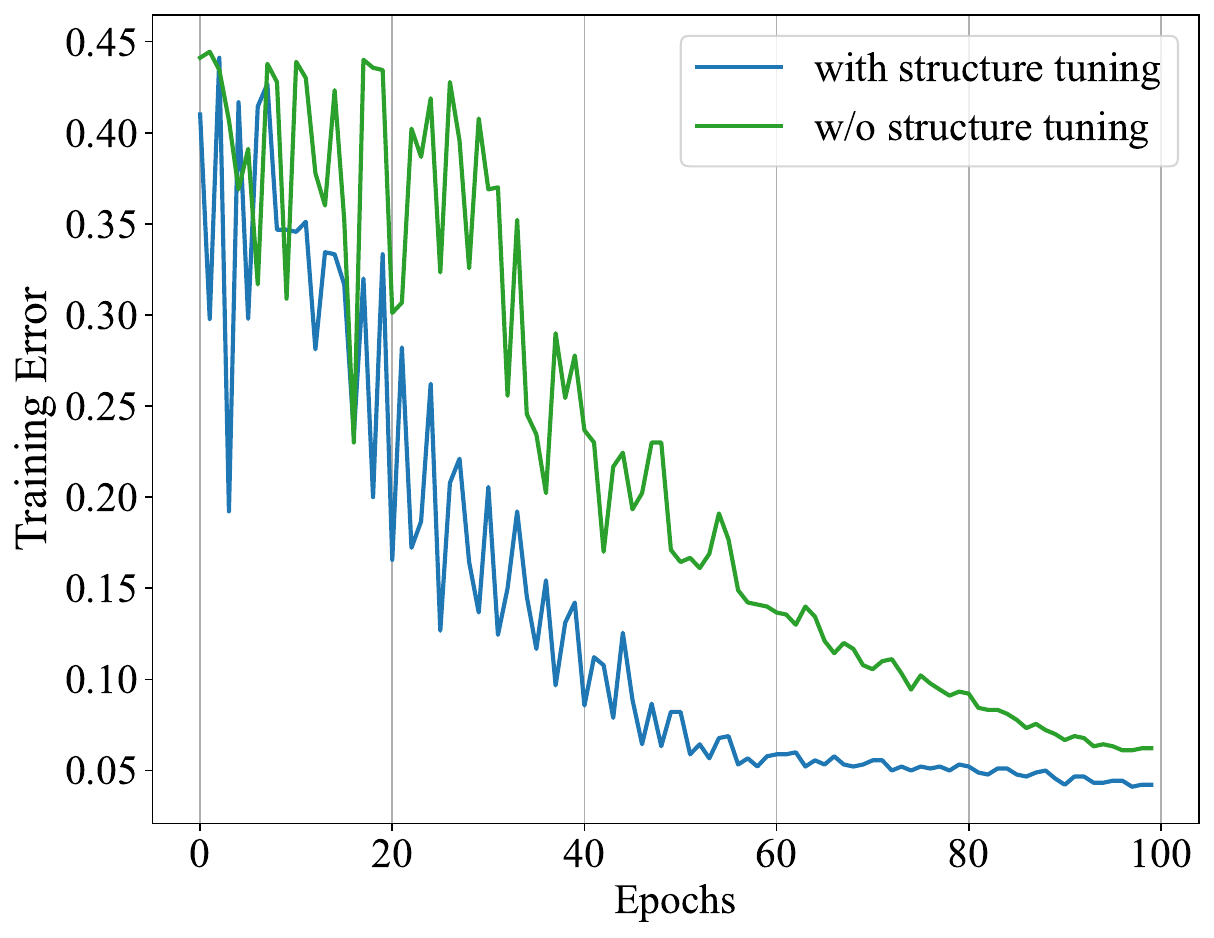}  
        }\label{error_1a}}
        \subfigure[Yelp]{\scalebox{0.16}
{\includegraphics{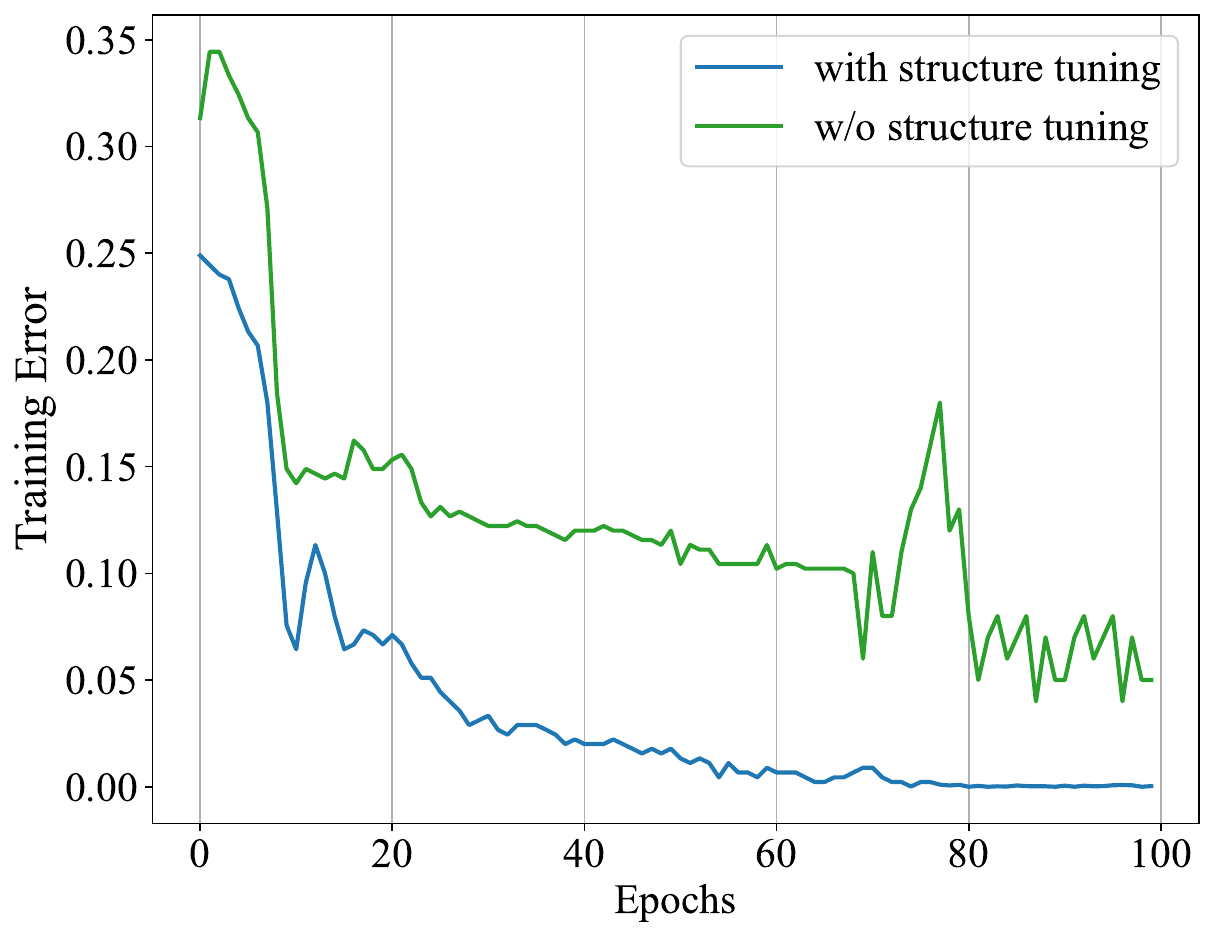}
        }\label{error_2b}}
        \subfigure[DBLP]{\scalebox{0.16}
        {\includegraphics{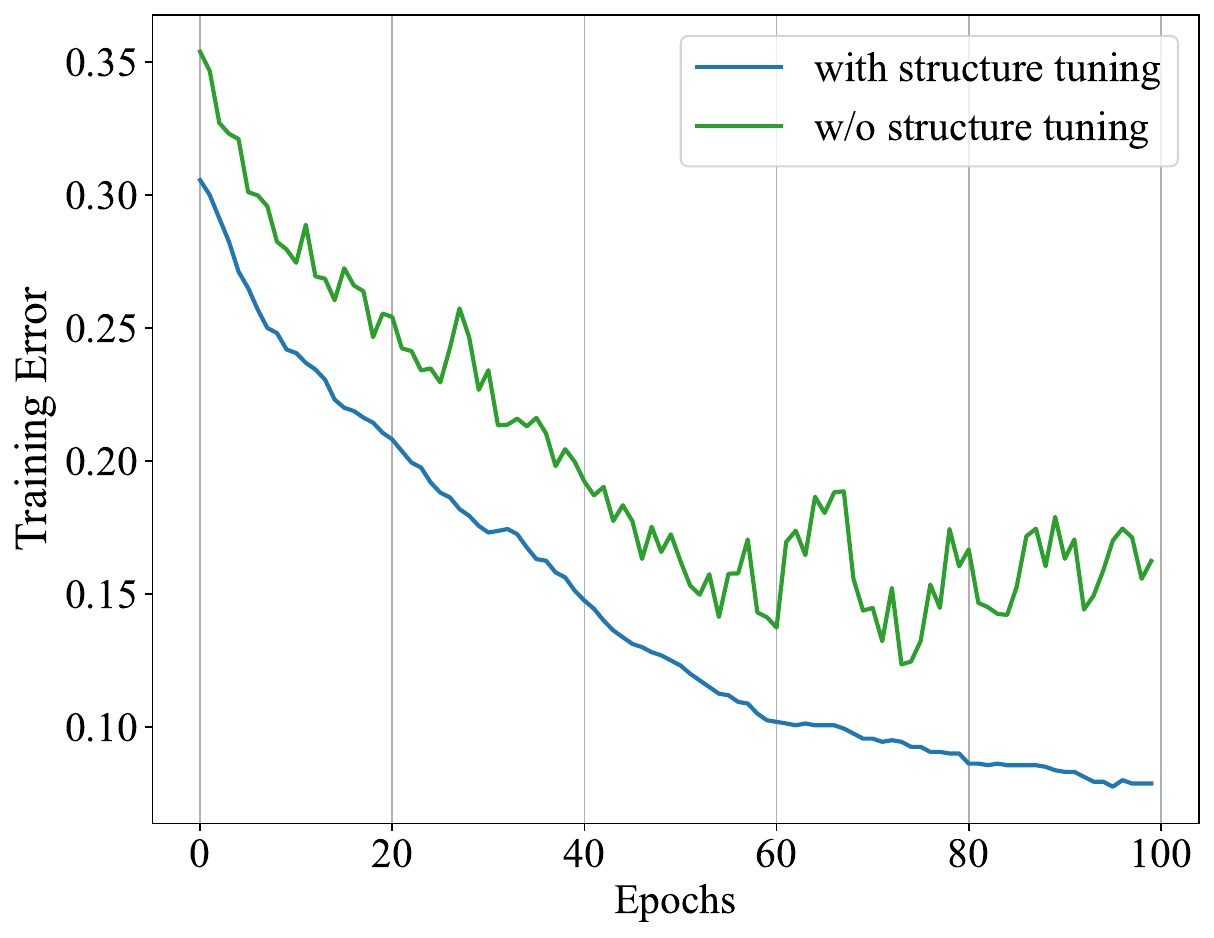}
        }\label{error_2c}}
        \subfigure[Aminer]{\scalebox{0.16}
{\includegraphics{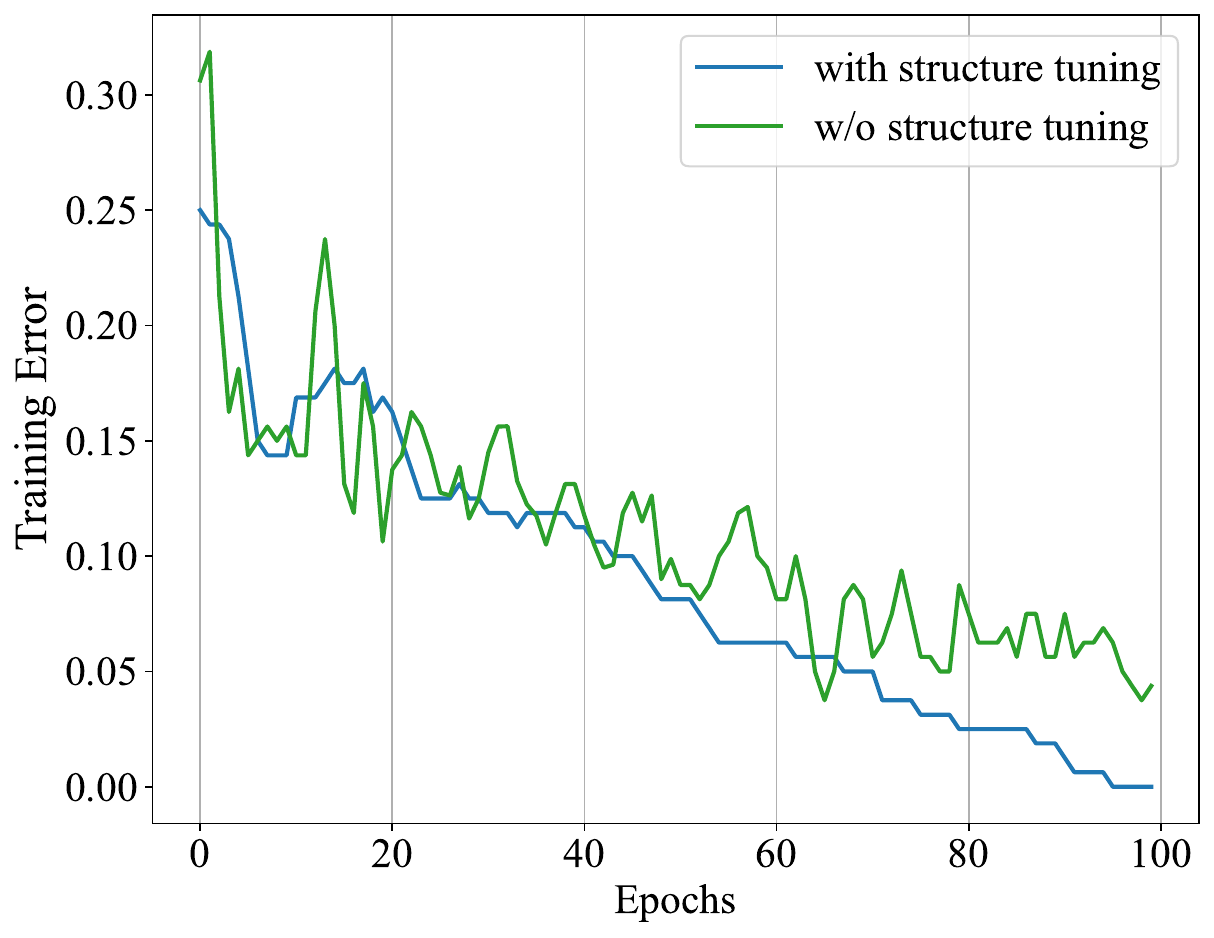}
        }\label{error_2d}}
		\caption{The training error of the proposed method with and without the structure tuning on all heterogeneous graph datasets. }
 \label{trainerrorfig}
\end{figure*}

\begin{figure*}[ht]
\centering
        \subfigure[ACM]{\scalebox{0.16}{\includegraphics{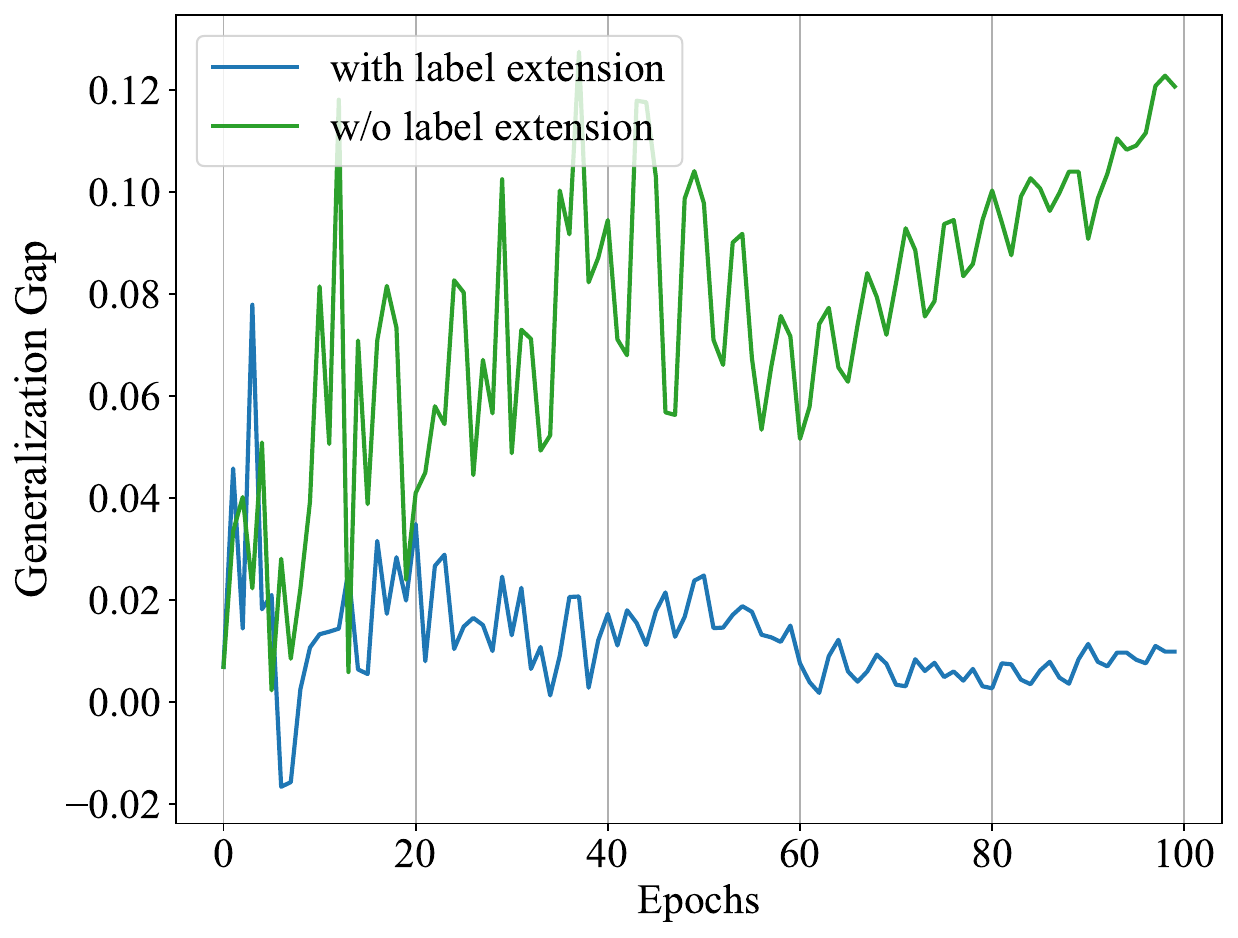}  
        }\label{gap_1a}}
        \subfigure[Yelp]{\scalebox{0.16}
{\includegraphics{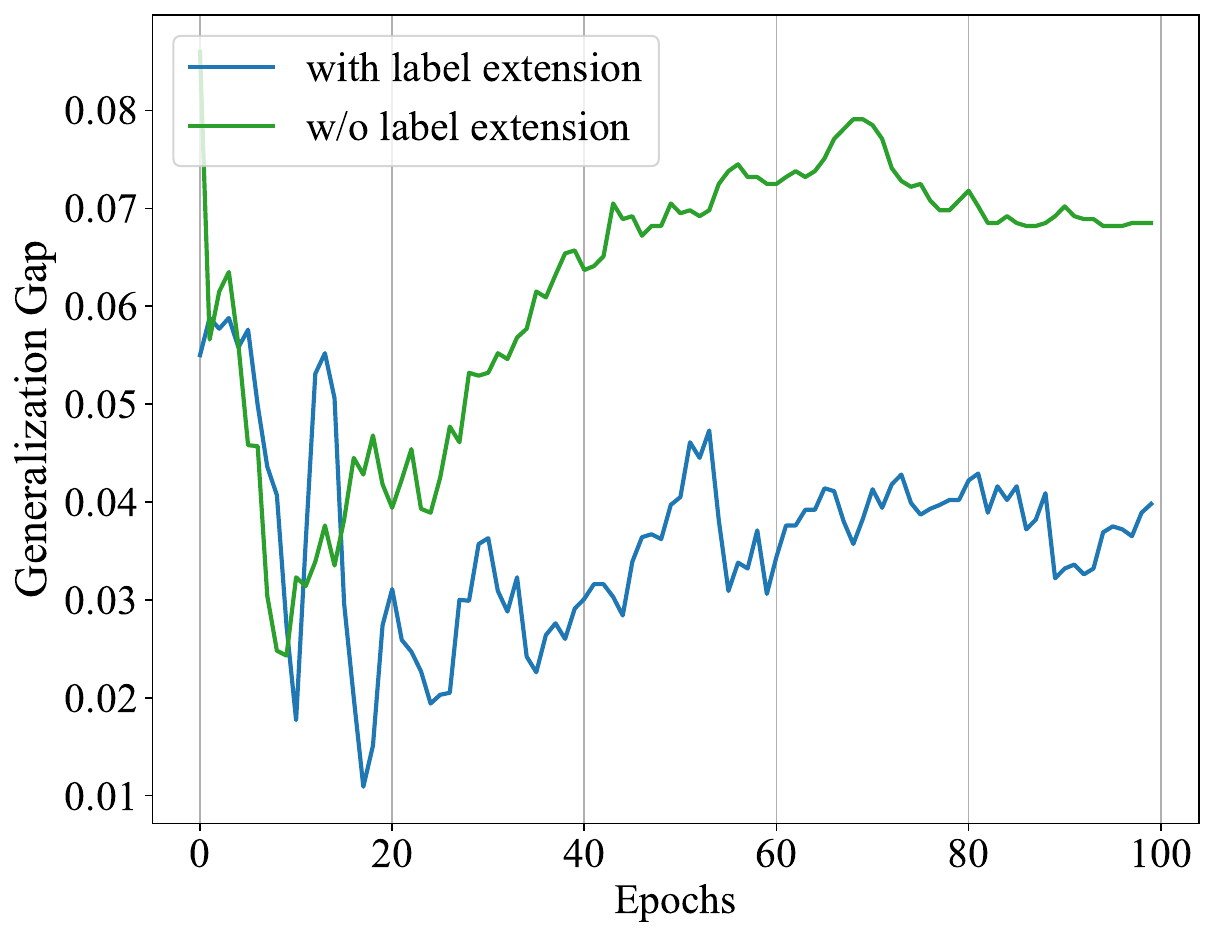}
        }\label{gap_1b}}
        \subfigure[DBLP]{\scalebox{0.16}
        {\includegraphics{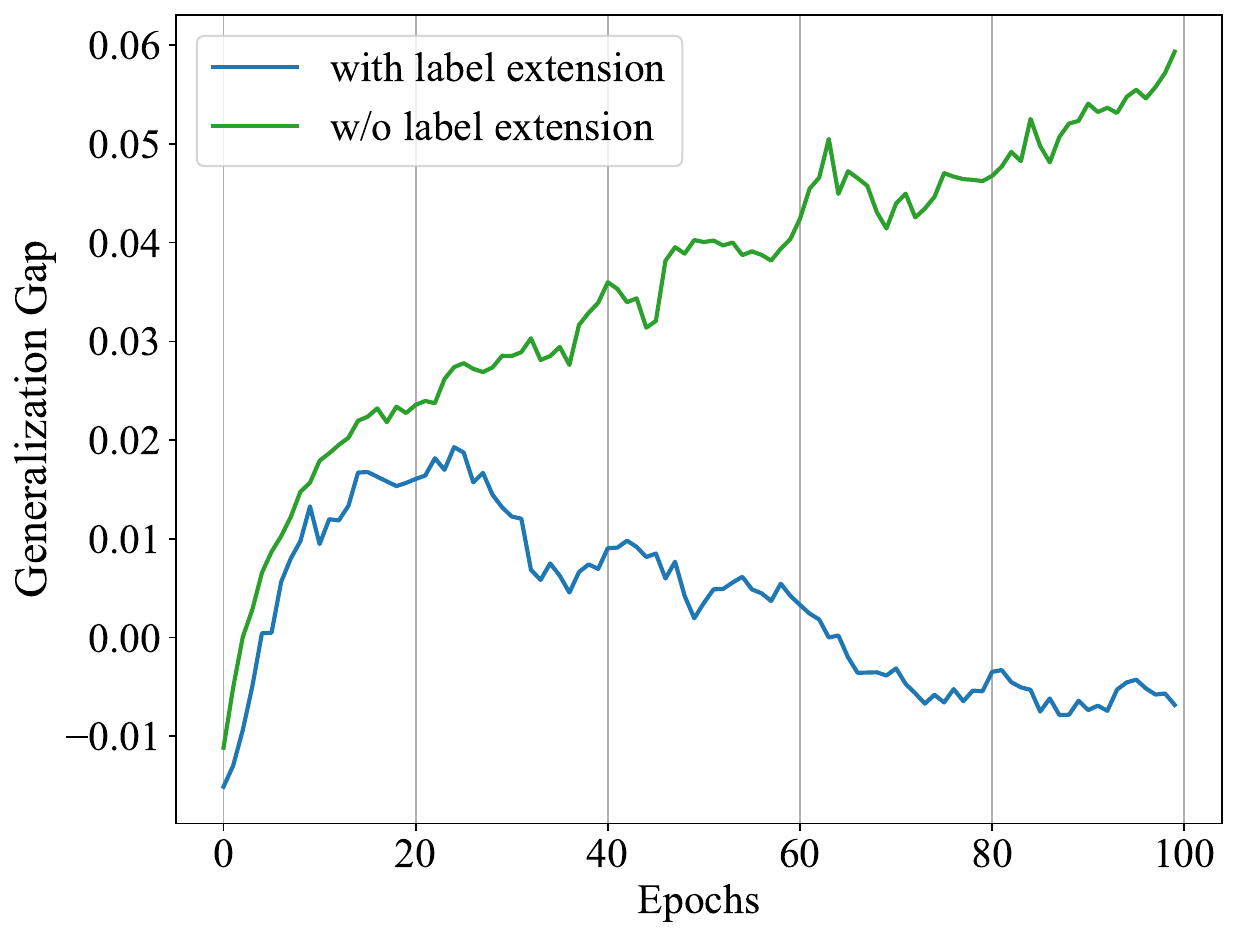}
        }\label{gap_1c}}
        \subfigure[Aminer]{\scalebox{0.16}
{\includegraphics{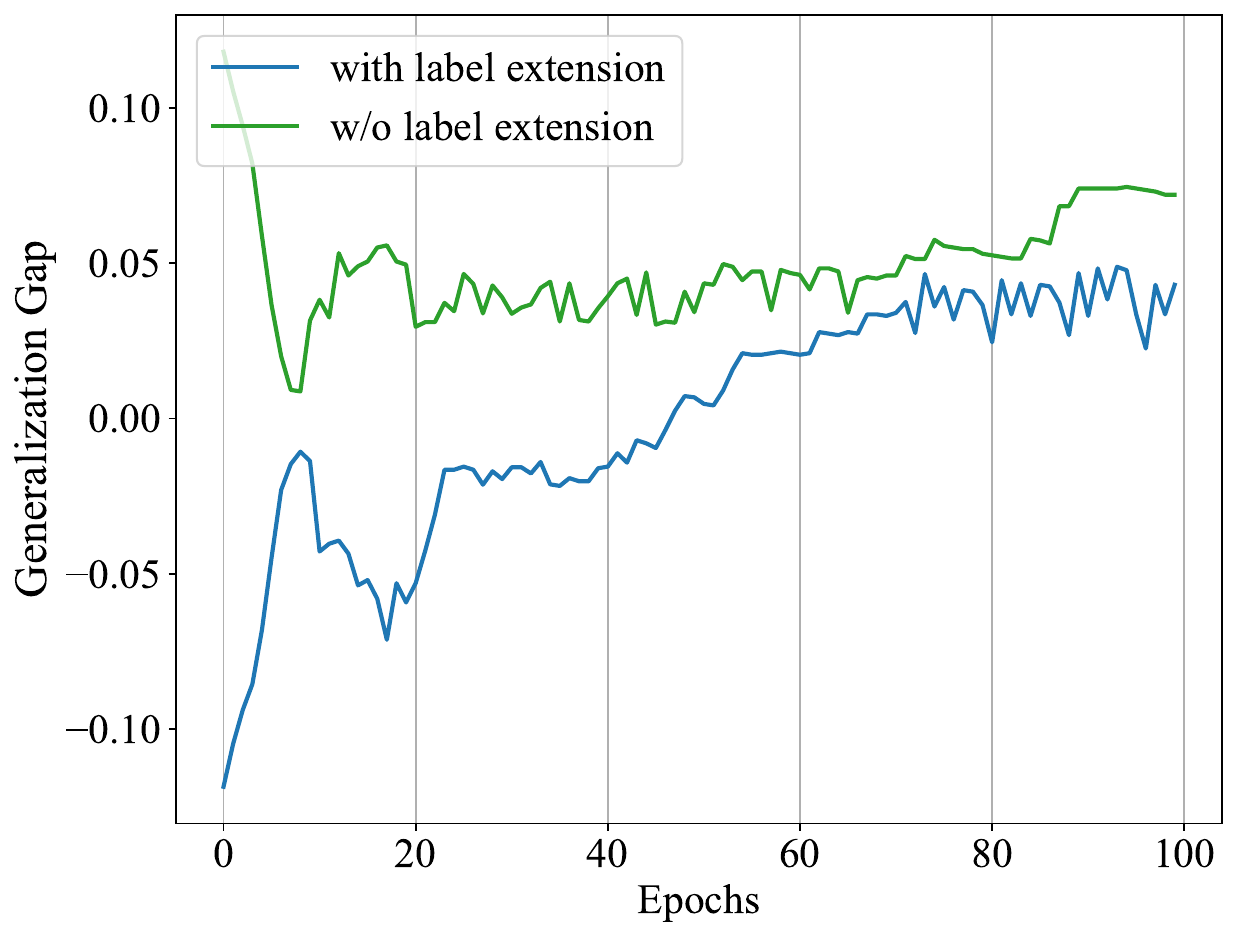}
        }\label{gap_1d}}
		\caption{The generalization gap (the difference between test error and training error) of the proposed method with and without the labeled data extension on all heterogeneous graph datasets. }
 \label{gapfig}
\end{figure*}

\subsection{Model Architectures and Settings}
\label{descriptionmodel}
As described in Section \ref{method}, the proposed method employs homogeneous and heterogeneous adapters to tune the homogeneous and heterogeneous graph structures (\ie $\mathbf{A}$ and $\mathbf{S}$) and capture task-related structural information. Moreover, the proposed method employs the projection (\ie $g_\vartheta$) to obtain the prediction matrix $\mathbf{P}$. In the proposed method, homogeneous and heterogeneous adapters are simply implemented by two linear layers, followed by the ReLU activation. In addition, the projection $g_\vartheta$ is also simply implemented by the linear layer.  Finally, In the proposed method,  all parameters were optimized by the Adam optimizer \citep{kingma2015adam} with an initial learning rate. In all experiments, we repeat the experiments five times for all methods and report the average results.
 


\subsection{Computing Resource Details}
\label{details}
All experiments were implemented in PyTorch and conducted on a server with 8 NVIDIA GeForce 3090 (24GB memory each). Almost every experiment can be done on an individual 3090, and the training time of all comparison methods as well as our method, is less than 1 hour.

\section{Additional Experiments}
\label{additional_results}
This section provides some additional experimental results to support the proposed method, including the ablation study on dual adapters in Section \ref{ablation_adapter}, visualization of the learned representations in Section \ref{vis_rep}, parameter analysis in Section \ref{para_ana}, experimental results on the node clustering task in Table \ref{tabclu}.

\subsection{Effectiveness of the Structure Tuning}
\label{structure_tuning}
The proposed method designs the dual adapters to capture more task-related structural information than prompt-tuning-based methods. 
To verify the effectiveness of the structure tuning with dual adapters, we investigate the training error of the proposed method with and without the structure tuning, and report the results in Figures \ref{trainerrorfig}. Obviously, as the number of epochs increases, the training error of the proposed method with structure tuning continues to decrease and finally tends to 0. This indicates that the proposed method fits the input data well. 
Moreover, the proposed method with structure tuning obtains consistently lower training error than the method without structure tuning.
The reason can be attributed to the fact that the structure tuning enables the model to fit the input data better and get closer to the optimal parameters $\overline{\mathcal{P}}_{M}$, thus reducing the training error and improving the model's generalization ability. 

\subsection{Effectiveness of the Potential Labeled Data Extension}
\label{Labeled_Data_Extension}
The proposed method designs the potential labeled data extension to incorporate both labeled and unlabeled nodes as supervision signals, thus alleviating the limited labeled data in the tuning stage. 
To verify the effectiveness of the potential labeled data extension, we investigate the generalization gap of the proposed method with and without the label extension and report the results in Figure \ref{gapfig}. From Figure \ref{gapfig}, we can find that the proposed method with the potential labeled data extension consistently achieves a smaller generalization gap than the method without label extension. This is reasonable because the label extension increases the number of training samples potentially, thus decreasing the generalization gap bound $O(\sqrt{\mid \overline{\mathcal{P}}_{M}\mid/n_{M}})$ and further decreasing the generalization error bound of existing methods.

\begin{table*}[t]
\small
\centering
\setlength\tabcolsep{3.1pt}
\caption{Classification performance (\ie Macro-F1 and Micro-F1) of the variant methods without homogeneous and heterogeneous adapters on all heterogeneous graph datasets.}
\begin{tabular}{lccccccccccccc}
\toprule
\multirow{2}{*}{\textbf{Method}}&\multicolumn{2}{c}{\textbf{ACM}}& \multicolumn{2}{c}{\textbf{Yelp}}& \multicolumn{2}{c}{\textbf{DBLP}}& \multicolumn{2}{c}{\textbf{Aminer}} \\
\cmidrule(r){2-3} \cmidrule(r){4-5} \cmidrule(r){6-7} \cmidrule(r){8-9}
&Macro-F1&Micro-F1&Macro-F1&Micro-F1 &Macro-F1&Micro-F1 &Macro-F1&Micro-F1\\
\midrule
w/o hom-adapter& 92.1$\pm$0.6 & 92.0$\pm$0.5 & 91.1$\pm$0.7 &90.7$\pm$0.8  &83.7$\pm$1.2 &84.5$\pm$1.0 &73.2$\pm$0.8&82.3$\pm$0.9\\
w/o het-adapter& 86.0$\pm$0.7 &85.3$\pm$0.6  &92.2$\pm$0.8 &91.4$\pm$0.7 &81.4$\pm$0.7 &82.5$\pm$0.6&71.9$\pm$0.5 &80.1$\pm$0.3\\
Proposed& \textbf{92.7$\pm$0.4} & \textbf{92.7$\pm$0.7} & \textbf{93.1$\pm$0.6}&
\textbf{92.7$\pm$0.5}&
\textbf{94.0$\pm$0.7}&
\textbf{94.7$\pm$0.8}&
\textbf{78.3$\pm$0.5}&
\textbf{87.1$\pm$0.6}\\
\bottomrule
\end{tabular}
\label{tab_adapters}
\end{table*}

\begin{table*}[t]
\small
\centering
\setlength\tabcolsep{3.7pt}
\caption{Classification performance (\ie Macro-F1 and Micro-F1) of the variant methods with the proposed margin loss and the InfoNCE loss on all heterogeneous graph datasets.}
\begin{tabular}{lccccccccccccc}
\toprule
\multirow{2}{*}{\textbf{Method}}&\multicolumn{2}{c}{\textbf{ACM}}& \multicolumn{2}{c}{\textbf{Yelp}}& \multicolumn{2}{c}{\textbf{DBLP}}& \multicolumn{2}{c}{\textbf{Aminer}} \\
\cmidrule(r){2-3} \cmidrule(r){4-5} \cmidrule(r){6-7} \cmidrule(r){8-9}
&Macro-F1&Micro-F1&Macro-F1&Micro-F1 &Macro-F1&Micro-F1 &Macro-F1&Micro-F1\\
\midrule
InfoNCE loss& 89.6$\pm$0.9 &89.5$\pm$0.7  &91.1$\pm$0.5 &91.4$\pm$0.6 &92.1$\pm$0.8 &93.0$\pm$0.7&75.8$\pm$0.4 &82.6$\pm$0.5\\
Margin loss& \textbf{92.7$\pm$0.4} & \textbf{92.7$\pm$0.7} & \textbf{93.1$\pm$0.6}&
\textbf{92.7$\pm$0.5}&
\textbf{94.0$\pm$0.7}&
\textbf{94.7$\pm$0.8}&
\textbf{78.3$\pm$0.5}&
\textbf{87.1$\pm$0.6}\\
\bottomrule
\end{tabular}
\label{tab_loss}
\end{table*}

\begin{figure*}[t]
\centering
        \subfigure[HERO+Fine-Tuning (SIL: 0.31)]{\scalebox{0.27}{\includegraphics{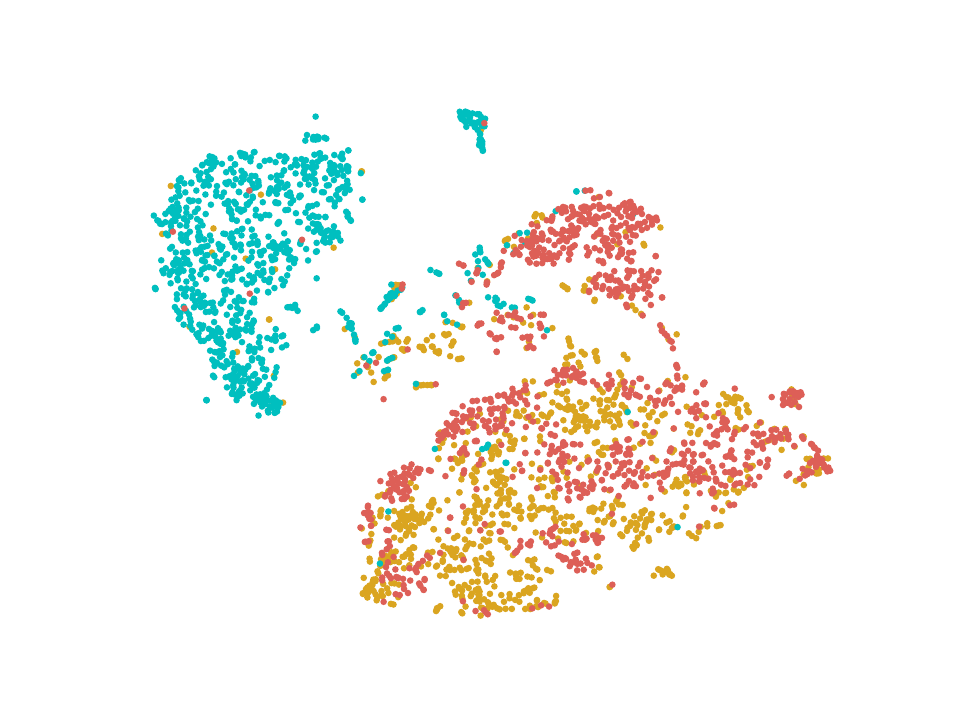}  
        }\label{hdmi}}
        \subfigure[HERO+HetGPT (SIL: 0.38)]{\scalebox{0.27}
{\includegraphics{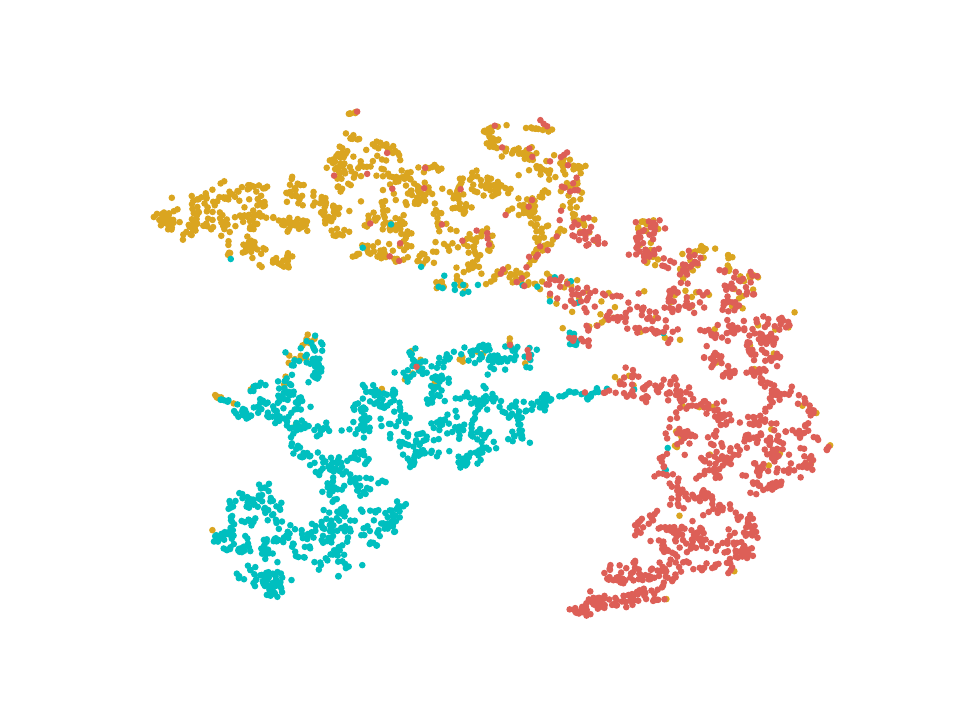}
        }\label{heco}}
        \subfigure[HERO+HG-Adapter (SIL: 0.44)]{\scalebox{0.28}
        {\includegraphics{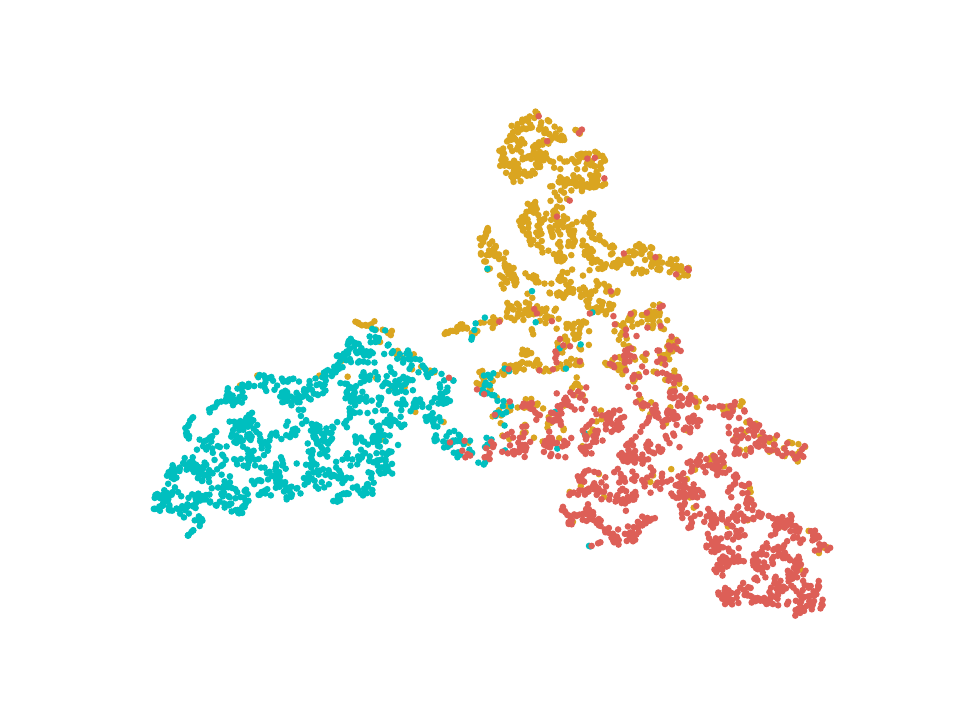}
        }\label{hero}}
		\caption{Visualization plotted by t-SNE and the corresponding silhouette scores (SIL) of the representations learned by HERO with fine-tuning, prompt-tuning (\ie HetGPT), and the proposed adapter-tuning on the DBLP dataset, respectively. }
 \label{vistsne}
\end{figure*}

\begin{figure*}[t]
\centering
        \subfigure[ACM]{\scalebox{0.195}{\includegraphics{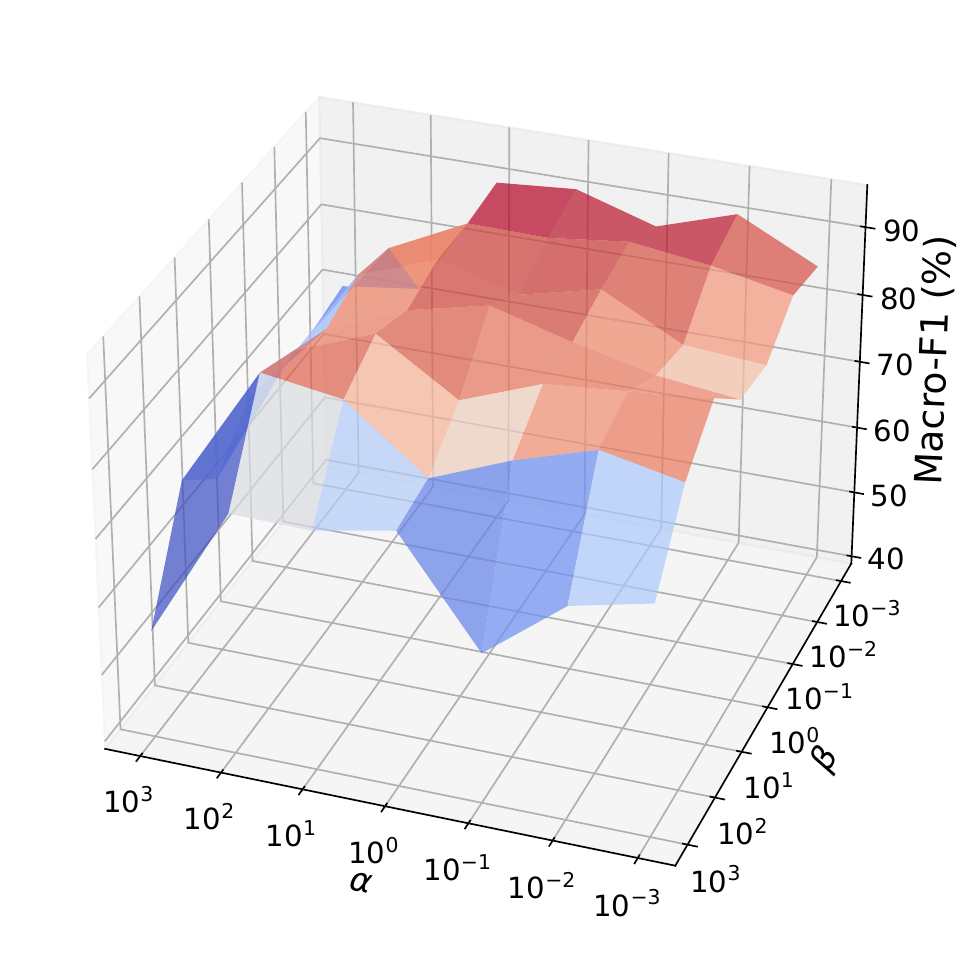}  
        }\label{para_2a}}
        \subfigure[Yelp]{\scalebox{0.195}
{\includegraphics{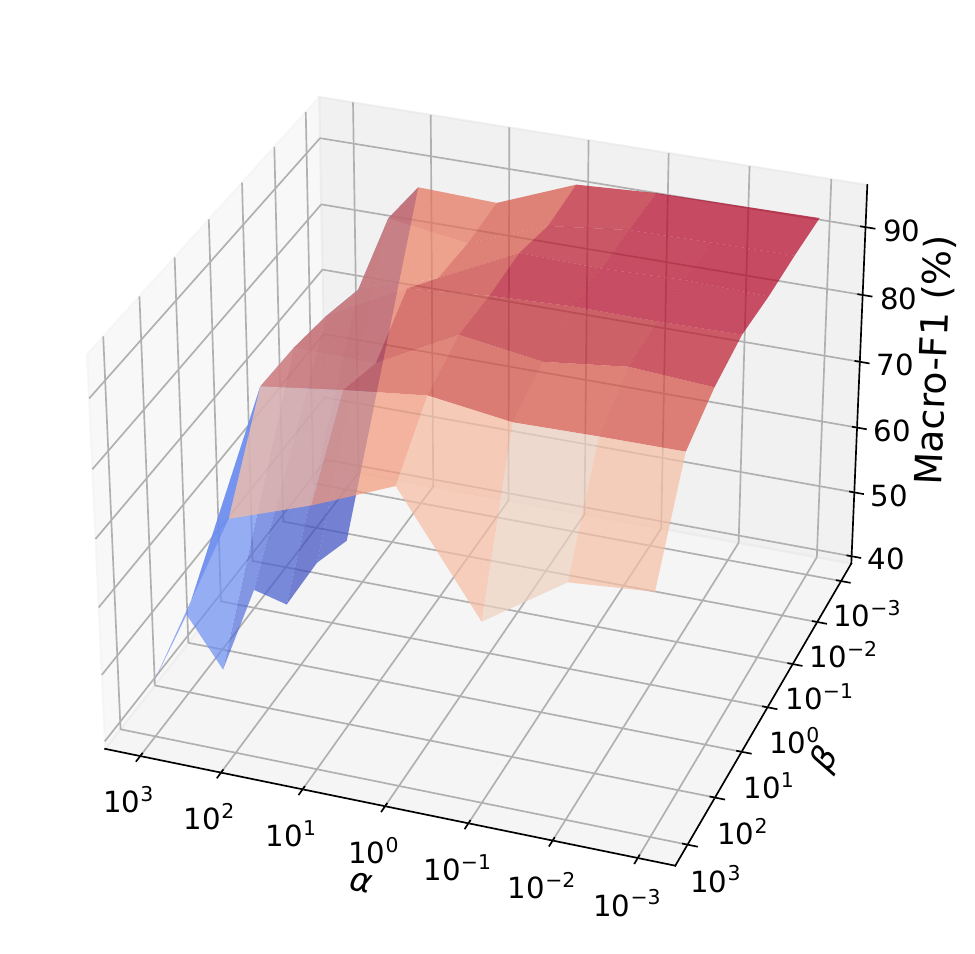}
        }\label{para_2b}}
        \subfigure[DBLP]{\scalebox{0.195}
        {\includegraphics{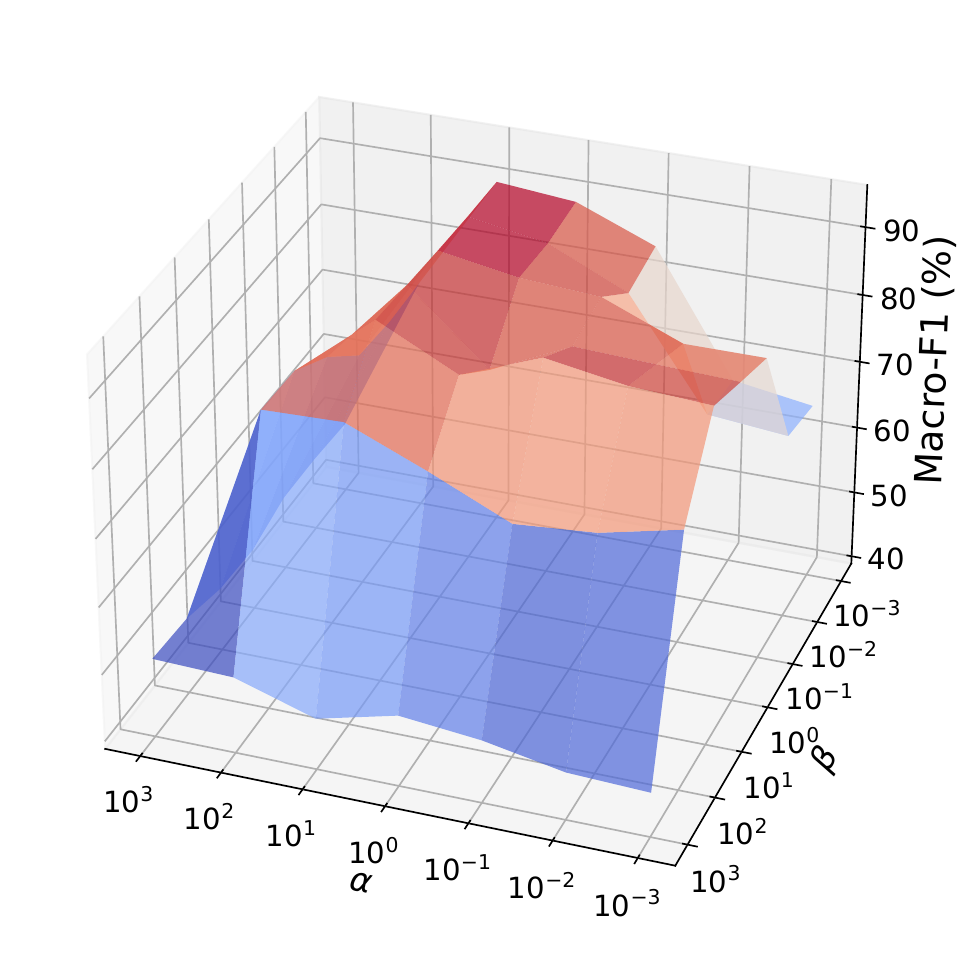}
        }\label{para_2c}}
        \subfigure[Aminer]{\scalebox{0.195}
{\includegraphics{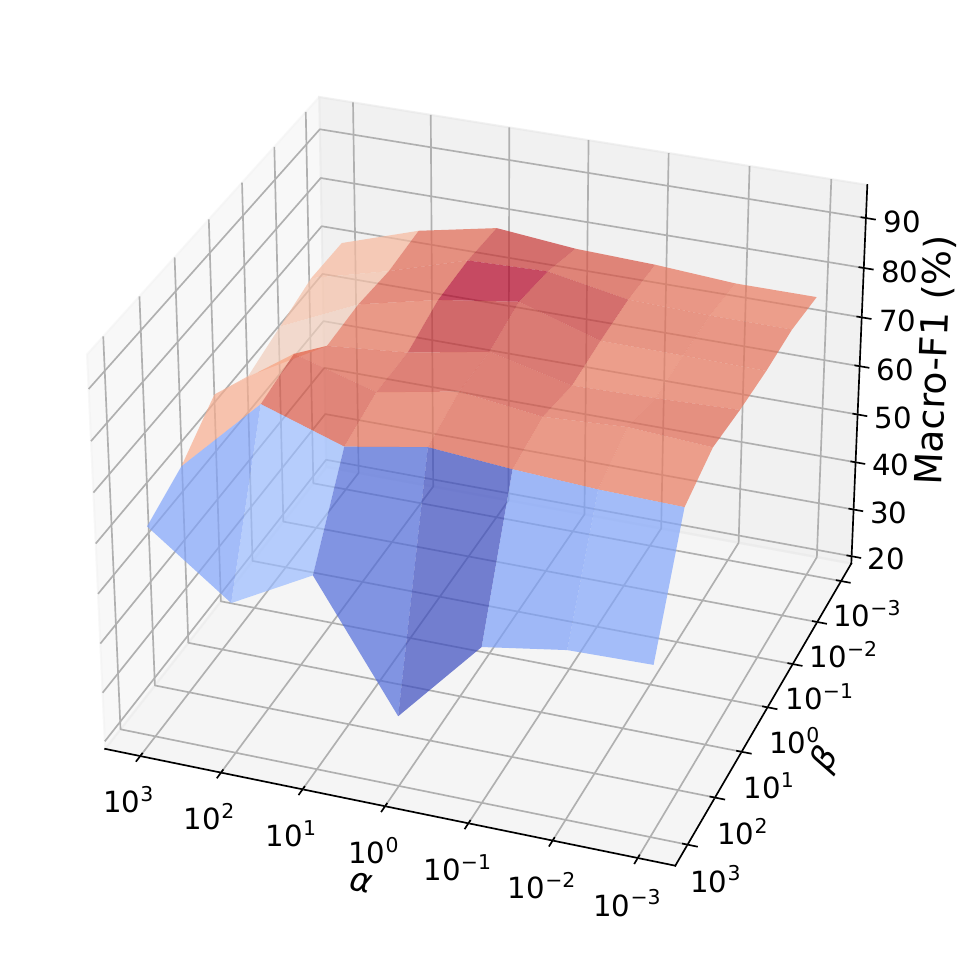}
        }\label{para_2d}}
		\caption{The classification performance of the proposed method at different parameter settings (\ie $\alpha$ and $\beta$) on all heterogeneous graph datasets. }
 \label{parafig2}
\end{figure*}

\begin{figure*}[ht]
\centering
        \subfigure[ACM]{\scalebox{0.195}{\includegraphics{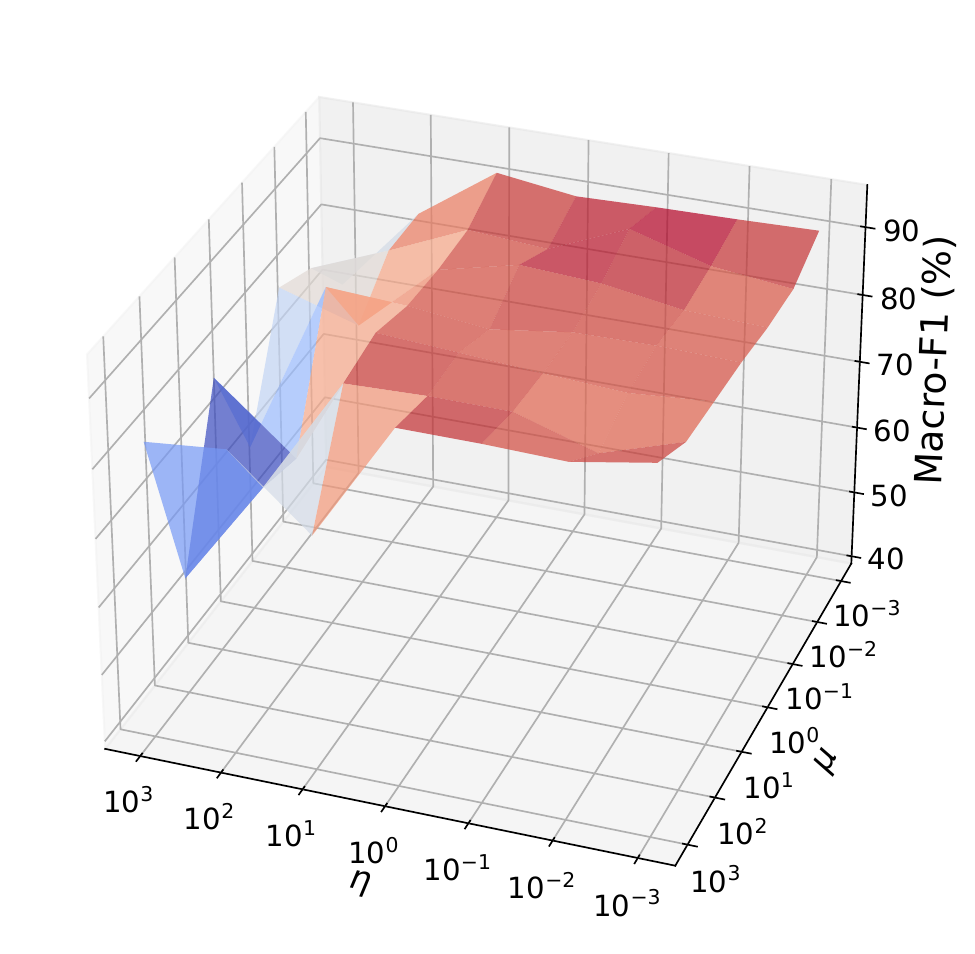}  
        }\label{para_1a}}
        \subfigure[Yelp]{\scalebox{0.195}
{\includegraphics{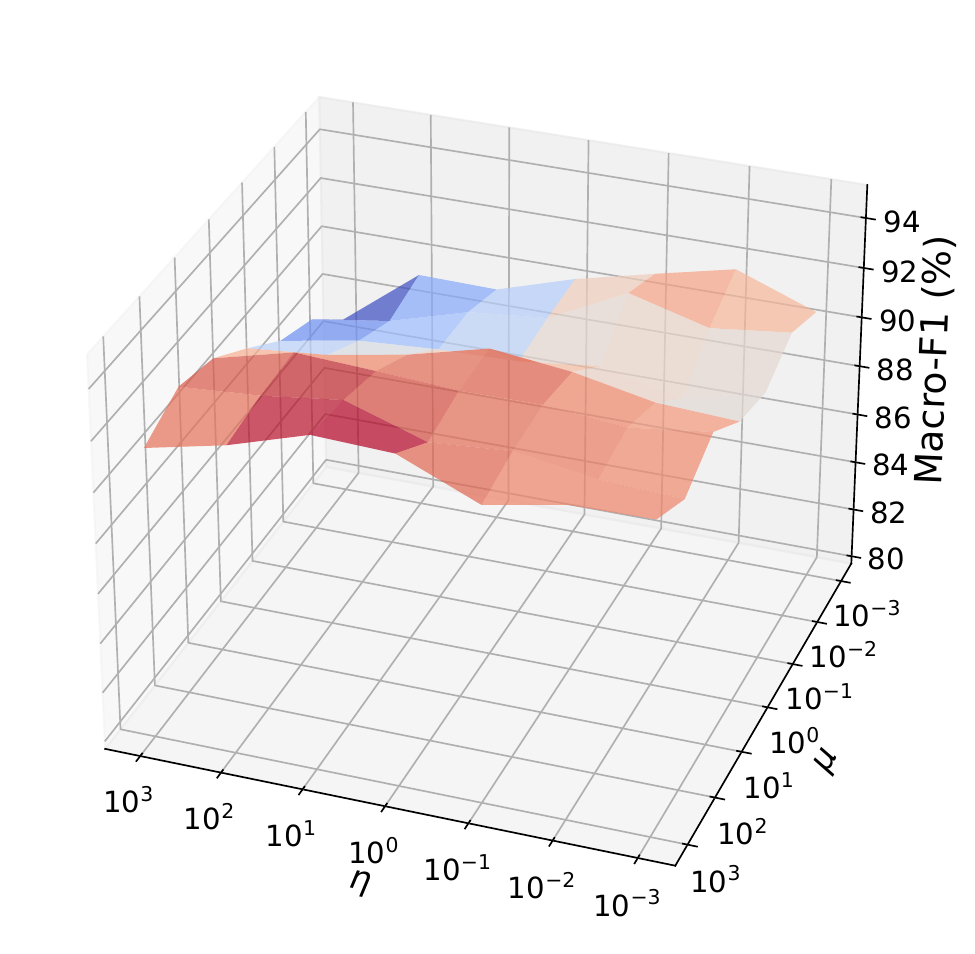}
        }\label{para_1b}}
        \subfigure[DBLP]{\scalebox{0.195}
        {\includegraphics{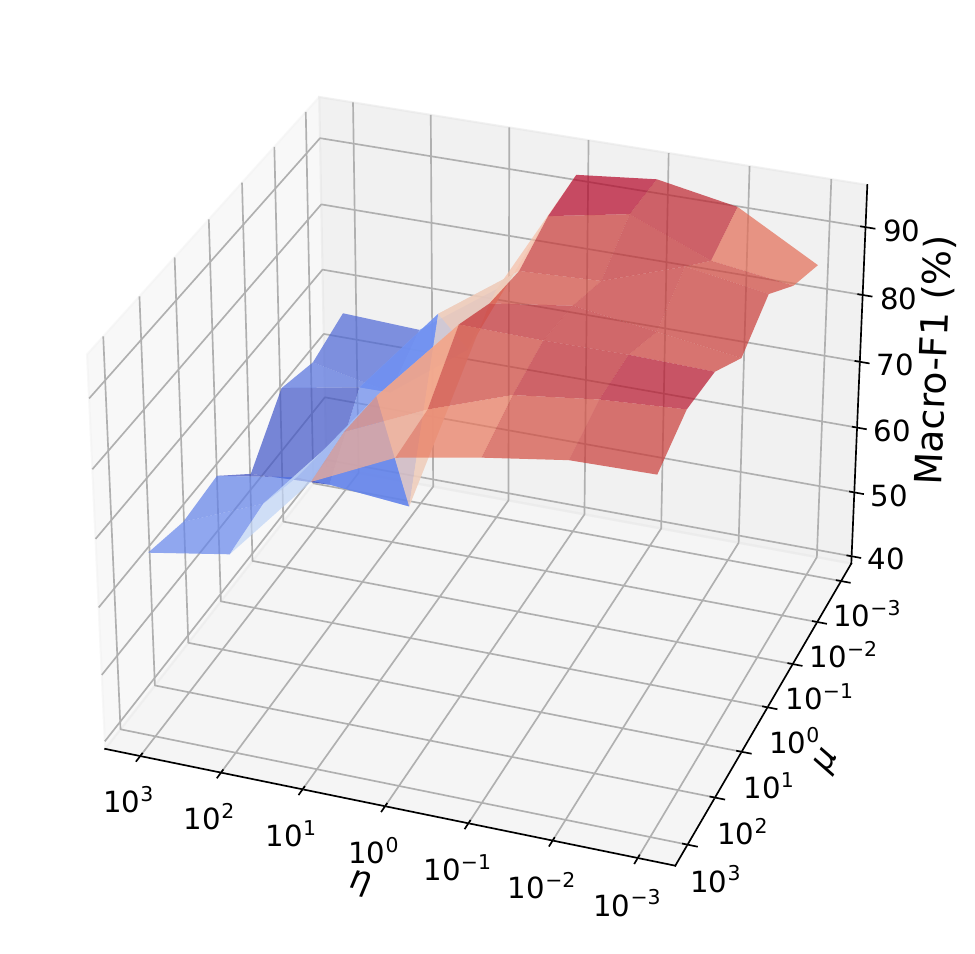}
        }\label{para_1c}}
        \subfigure[Aminer]{\scalebox{0.195}
{\includegraphics{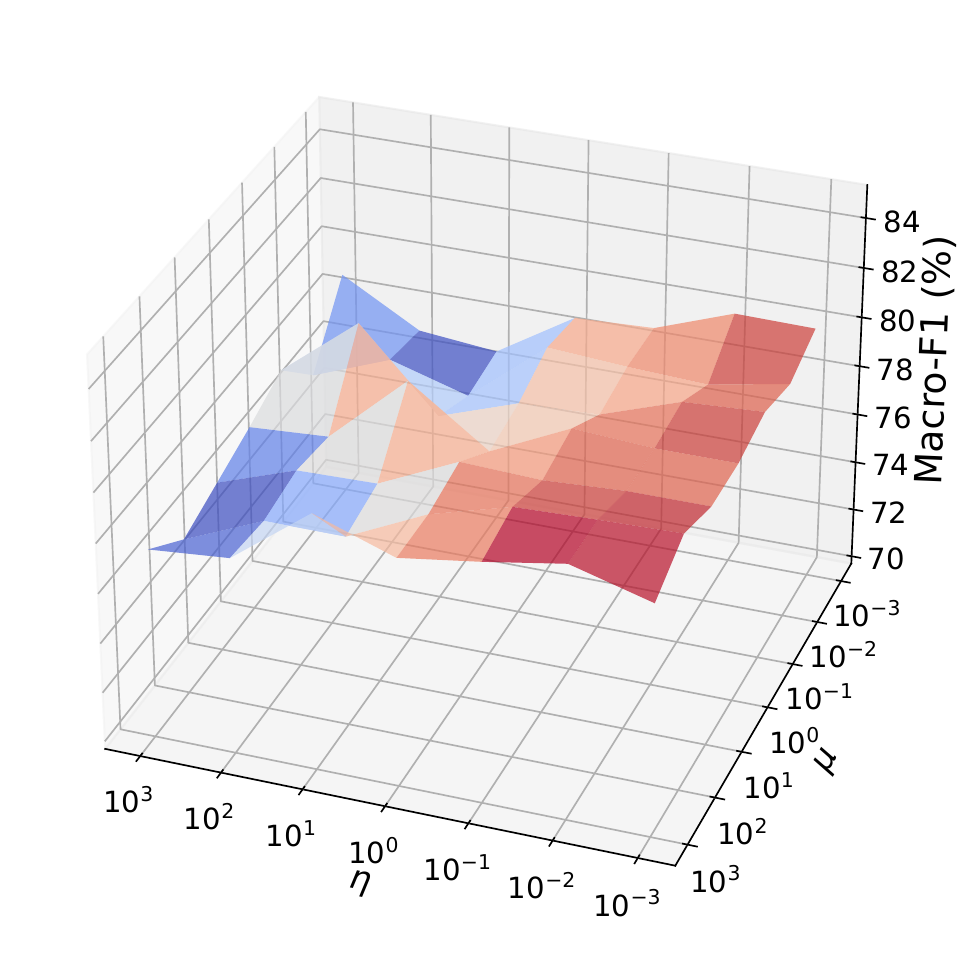}
        }\label{para_1d}}
		\caption{The classification performance of the proposed method at different parameter settings (\ie $\eta$ and $\mu$) on all heterogeneous graph datasets. }
 \label{parafig}
\end{figure*}

\subsection{Ablation Study of Dual Adapters}
\label{ablation_adapter}
The proposed method designs dual adapters to adaptively tune the homogeneous and heterogeneous graph structures (\ie $\mathbf{A}$ and $\mathbf{S}$) to capture the task-related information. We demonstrate the effectiveness of dual adapters in Section \ref{effective_adapters}. To further verify it, we investigate the node classification performance of the variants methods without the homogeneous and heterogeneous adapters, respectively, and report the results in Table \ref{tab_adapters}. 
Obviously, the proposed method with dual adapters obtains superior performance than the variant methods without homogeneous and heterogeneous adapters. The reason can be attributed to the fact that dual adapters tunes homogeneous and heterogeneous graph structures simultaneously, thus can better fit the input data than variant method without adapters to achieve lower generalization error bound and better performance on different downstream tasks. As a result, the effectiveness of dual adapters is verified again.

\subsection{Ablation Study of the Margin Loss}
\label{ablation_loss}
The proposed method designs the margin loss to optimize the heterogeneous graph structure $\mathbf{S}$ and treat both labeled and unlabeled nodes as equal supervision signals. The margin loss aims to decrease the distance $d(\mathbf{c}_{\tilde{\mathbf{y}}_i }, \hat{\mathbf{m}}_i)^{2}$, while increasing the distance $d(\mathbf{c}_{\tilde{\mathbf{y}}_i }, \hat{\mathbf{m}}_{j})^{2}$, thus satisfying the ``safe"  distance between them. To verify the effectiveness of the margin loss, we investigate the node classification performance of the variants method with the InfoNCE loss instead of the proposed margin loss and report the results in Table \ref{tab_loss}. From Table \ref{tab_loss}, we can find that the proposed method with the margin loss obtains better performance than the variant method with the InfoNCE loss. This can be attributed to the fact that directly aligning class-subgraph representation and adapted representation is unreasonable since they come from different feature distributions.

\subsection{Visualization of Learned Representations}
\label{vis_rep}
To further verify the effectiveness of the learned representations, we visualize node representations learned by the combinations of pre-trained HERO and different tuning methods (fine-tuning, prompt-tuning, and proposed adapter-tuning) on the ACM dataset and report the results and corresponding silhouette scores (SIL) in Figure \ref{vistsne}. Obviously, in the visualization, the node representations learned by the proposed HG-Adapter exhibit better clustering status, \ie nodes with different class labels are more widely separated. Moreover, the representations learned by the proposed method obtain the best silhouette score, compared to other methods (\ie fine-tuning, and HetGPT). The reason can be attributed to the fact that the proposed method designs a contrastive loss based on class subgraph similarity to ensure
that nodes within the same class are close to each other, thereby enhancing clustering performance.

\subsection{Parameter Analysis}
\label{para_ana}
In the proposed method, we employ non-negative parameters $\alpha$ and $\beta$ to achieve a trade-off between the adapted representations and the frozen representations. Moreover, we employ non-negative parameters $\eta$ and $\mu$ between different terms of the final objective function $\mathcal{J}$. To investigate the impact of $\alpha$ and $\beta$  as well as $\eta$ and $\mu$ with different settings, we conduct the node classification on all heterogeneous graph datasets by varying the value of parameters in the range of $[10^{-3}$,$10^{3}]$ and reporting the results in Figure \ref{parafig2} and Figure \ref{parafig}. 

From Figure \ref{parafig2}, we can find that if the values of parameters (\ie $\alpha$ and $\beta$) are too large (\eg $10^{3}$), the proposed method cannot achieve satisfactory performance. This verifies that both the adapted representations and the frozen representations are important for downstream tasks. In addition, from Figure \ref{parafig}, we can also find that if the values of parameters (\ie $\eta$ and $\mu$) are too large (\eg $10^{3}$), the proposed method obtains inferior performance. The reason can be attributed to the fact that when $\eta$ and $\mu$ are large, the effect of the contrastive loss $\mathcal{L}_{con}$ may be affected, thus failing to provide sufficient label guidance for the model.


\begin{table*}[t]
\footnotesize
\centering
\setlength\tabcolsep{3.6pt}
\caption{Clustering performance (\ie NMI and ARI) on all heterogeneous graph datasets, where the best results are highlighted in bold, while improved results with the proposed HG-Adapter are underlined. The ``+” symbol indicates the integration of HG-Adapter and HetGPT with original pre-trained HGNN models.}
\begin{tabular}{lccccccccccccc}
\toprule
\multirow{2}{*}{\textbf{Method}}&\multicolumn{2}{c}{\textbf{ACM}}& \multicolumn{2}{c}{\textbf{Yelp}}& \multicolumn{2}{c}{\textbf{DBLP}}& \multicolumn{2}{c}{\textbf{Aminer}} \\
\cmidrule(r){2-3} \cmidrule(r){4-5} \cmidrule(r){6-7} \cmidrule(r){8-9}
&NMI&ARI&NMI&ARI &NMI&ARI &NMI&ARI\\
\midrule
HAN& 65.6$\pm$1.3 & 67.4$\pm$1.5 & 37.8$\pm$0.9 &40.1$\pm$1.1  &77.5$\pm$0.7 &83.0$\pm$0.8 &35.5$\pm$0.6&31.6$\pm$0.5\\
HGT& 68.9$\pm$0.9 & 69.9$\pm$0.8 & 39.1$\pm$0.6 &41.2$\pm$0.7  &\textbf{78.6$\pm$0.4} &\textbf{83.9$\pm$0.6} &36.1$\pm$0.6&32.3$\pm$0.8\\
DMGI& 67.8$\pm$0.9 & 70.2$\pm$1.0 & 36.8$\pm$0.6 &34.4$\pm$0.7  &72.2$\pm$0.8 &72.8$\pm$0.9&27.3$\pm$0.9&23.1$\pm$0.8\\
HGCML& 69.1$\pm$0.7 & 71.6$\pm$0.8 & 37.4$\pm$0.6 &39.5$\pm$0.8  &74.5$\pm$0.9 &75.1$\pm$1.1 &35.9$\pm$0.6&31.1$\pm$0.5\\
HGMAE& 69.7$\pm$0.8 & 72.6$\pm$0.6 & 40.3$\pm$0.9 &42.4$\pm$0.8  &76.9$\pm$0.6 &82.3$\pm$0.7 &41.1$\pm$0.8 &38.3$\pm$0.9\\
HGPrompt& 69.2$\pm$0.4 & 72.0$\pm$0.5 & 37.5$\pm$0.4 &39.7$\pm$0.7  &76.1$\pm$0.6 &81.2$\pm$0.8 &37.2$\pm$0.9&33.8$\pm$1.1\\
\midrule
HDMI& 69.5$\pm$0.5 & 72.3$\pm$0.7 & 38.9$\pm$0.6 &40.7$\pm$0.8  &73.1$\pm$0.3 &74.4$\pm$0.4 &33.5$\pm$0.4&28.9$\pm$0.5\\
+HetGPT& 70.1$\pm$0.6 & 72.8$\pm$0.8 & 39.2$\pm$0.6 &41.5$\pm$0.7  &74.8$\pm$0.9 &75.7$\pm$1.1 &34.4$\pm$0.7&29.3$\pm$0.6\\
+HG-Adpater& \underline{70.4$\pm$0.7} & \underline{73.1$\pm$0.6} & \underline{39.7$\pm$0.5} &\underline{42.2$\pm$0.4}  &\underline{75.9$\pm$0.6} &\underline{80.7$\pm$0.8} &\underline{35.1$\pm$0.9}&\underline{30.9$\pm$1.0}\\
\midrule
HeCo& 67.8$\pm$0.8 & 70.5$\pm$0.7 & 39.3$\pm$0.6 &42.1$\pm$0.8  &74.5$\pm$0.8 &80.1$\pm$0.9 &32.2$\pm$1.1&28.6$\pm$1.0\\
+HetGPT& 68.2$\pm$0.6 & 70.8$\pm$0.5 & 40.1$\pm$0.7 &42.5$\pm$0.9  &75.1$\pm$0.5 &80.5$\pm$0.7 &32.7$\pm$0.4&29.2$\pm$0.6\\
+HG-Adpater& \underline{69.0$\pm$0.4} & \underline{72.9$\pm$0.3} & \underline{\textbf{41.2$\pm$0.5}} &\underline{\textbf{43.1$\pm$0.6}}  &\underline{76.3$\pm$0.9} &\underline{81.9$\pm$1.0}&\underline{34.1$\pm$0.7}&\underline{30.3$\pm$0.8}\\
\midrule
HERO& 68.8$\pm$0.6 &71.8$\pm$0.6  & 38.6$\pm$0.8&40.6$\pm$0.9 & 74.1$\pm$0.7 &79.3$\pm$0.7 & 36.8$\pm$0.7 &35.3$\pm$0.9\\
+HetGPT& 69.7$\pm$0.5 & 72.3$\pm$0.4 & 39.3$\pm$0.7 &41.3$\pm$0.9  &74.6$\pm$0.6 &80.4$\pm$0.5 &40.2$\pm$0.6 &37.1$\pm$0.7\\
+HG-Adapter& \underline{\textbf{70.5$\pm$0.8}} & \underline{\textbf{73.3$\pm$0.9}} & \underline{41.1$\pm$0.6}&\underline{42.8$\pm$0.5}  &\underline{76.2$\pm$0.8}&\underline{81.4$\pm$0.7}&\underline{\textbf{42.3$\pm$0.4}} &\underline{\textbf{39.4$\pm$0.5}}\\
\bottomrule
\end{tabular}
\vspace{-2mm}
\label{tabclu}
\end{table*}

\section{Potential Limitations and Future Directions}
\label{limitation}
In this paper, we design dual adapters to capture task-related structural information, thus approaching the optimal parameters and benefiting the model's generalization ability. Moreover, we design contrastive loss, feature reconstruction loss, and margin loss to optimize dual adapters as well as extend potential labeled data. Actually, the feature reconstruction loss relies on the feature-label consistency assumption. That is, if two nodes have the same label, their node features will be similar. However, there are a few cases in which nodes have the same label but totally different node features. In such cases, we may conduct class feature reconstruction instead of node feature reconstruction to address this issue. We consider these aspects as potential directions for future research.

\end{document}